\tikzset{
    >=stealth',
    punkt/.style={
           rectangle,
           rounded corners,
           draw=black, very thick,
           text width=6.5em,
           minimum height=2em,
           text centered},
    pil/.style={
           ->,
           thick,
           shorten <=2pt,
           shorten >=2pt,},
    box/.style={
           rectangle,
           dashed,
           draw=black,
           very thick,
           inner sep=4mm,
           rounded corners
    }
}
\DeclareMathOperator*{\argmax}{arg\,max}
\newtheorem{theorem}{Theorem}
\newcommand{\vocab}{\mathcal{M}}
\newcommand{\corpus}{\mathcal{S}}
\newcommand{\traincorpus}{\hat{\mathcal{S}}}
\DeclareMathOperator{\sign}{sign}
\begin{document}



\title{Beyond the Black Box: A Statistical Model for LLM Reasoning and Inference}

\author{
    Siddhartha Dalal,
    Vishal Misra
    }

\thanks{Siddhartha Dalal is with the Statistics Department and School of Professional Studies, Columbia University, New York, 10027 NY USA (e-mail: sd2803@columbia.edu)}
\thanks{Vishal Misra is with the Computer Science Department, Columbia University, New York, 10027 NY USA (e-mail: vishal.misra@columbia.edu)}

\begin{abstract}

This paper introduces a novel Bayesian learning model to explain the behavior of Large Language Models (LLMs), focusing on their core optimization metric of next token prediction. We develop a theoretical framework based on an ideal generative text model represented by a multinomial transition probability matrix with a prior, and examine how LLMs approximate this matrix. Key contributions include:

(i) a continuity theorem relating embeddings to multinomial distributions,
(ii) a demonstration that LLM text generation aligns with Bayesian learning principles,
(iii) an explanation for the emergence of in-context learning in larger models,
(iv) empirical validation using visualizations of next token probabilities from an instrumented Llama model

Our findings provide new insights into LLM functioning, offering a statistical foundation for understanding their capabilities and limitations. This framework has implications for LLM design, training, and application, potentially guiding future developments in the field.

\end{abstract}
\maketitle
\section{Introduction}

The advent of Large Language Models (LLMs), starting with GPT3 \cite{brown2020language}, has revolutionized the world of natural language processing, and the introduction of ChatGPT \cite{openai2023chatgpt} has taken the world by storm. There have been several approaches to try and understand how these models work, and in particular how "few-shot" or ``in context learning'' work \cite{liu2022transformers, Lightman2023LetsVerify, li2023symbolic}. In our work we look at the workings of an LLM from a novel standpoint, and develop a Bayesian model to explain their behavior. Bayesian models are natural here since tokens are being generated based on the past training data that supplies prior and the prompts supply observations which updates the prior. We focus on the optimization metric of next token prediction for these LLMs, and use that to build an abstract probability matrix which is the cornerstone of our model and analysis. We show in our paper that the behavior of LLMs is consistent with Bayesian learning and explain many empirical observations of the LLMs using our model.

Our key contributions include:

\begin{enumerate}
    \item A continuity theorem (Section \ref{sec:continuity}) that relates the rows of the abstract probability matrix to the embeddings of the prompt and proves a result on the continuity of the mapping between the embeddings and the multinomial distribution induced by the embedding.
    \item A demonstration (Section \ref{sec:icl-bayesian}) that text generation by the LLMs is a Bayesian learning mechanism, where the probability distribution generated for the next token prediction is the posterior induced by the prompt as the new evidence, and the pre-trained model as the prior.
    \item An explanation for the emergence of in-context learning in larger models (Section \ref{sec:icl-bayesian}).
    \item Empirical validation using visualizations of next token probabilities from an instrumented Llama model (Section \ref{sec:experiments}).
    \item A comprehensive analysis of embeddings and general LLM operation (Section \ref{sec:embeddings-algebra}), which provides insights into how LLMs navigate the high-dimensional embedding space during token prediction and generation.
\end{enumerate}

In the next section we describe the paper organization.

\section{Related Work}\label{sec:related}
Several recent works have explored connections between Bayesian principles and modern machine learning techniques. Xie et al.~\cite{DBLP:journals/corr/abs-2111-02080} frame in-context learning in large language models as implicit Bayesian inference, showing how it can be viewed as inference in a mixture of hidden Markov models. While they focus specifically on in-context learning, our work provides a broader Bayesian framework for understanding LLM behavior, including but not limited to in-context learning. Fong and Holmes~\cite{10.1093/biomet/asz077} demonstrate an equivalence between the marginal likelihood and exhaustive cross-validation using the log posterior predictive as a scoring rule. Our work extends this connection to the that of LLMs, showing how next token prediction in LLMs can be understood through a similar Bayesian lens. Moreno-Muñoz et al. ~\cite{10.5555/3666122.3669615} prove that masked pre-training in large language models implicitly maximizes the model's marginal likelihood. In contrast, our work focuses on the generative process of LLMs rather than their training, providing a theoretical foundation for understanding LLM behavior during inference.

\subsection{Paper organization and our contributions}

We first describe our approach at a high level, and in the rest of the paper get into the details of the approach. We focus on the optimization metric of these LLMs, namely, predict the next token, and develop the model from there on. We first describe the ideal generative text model (Section~\ref{sec:ideal-text}), and relate it to its representation of an abstract (and enormous) multinomial transition probability matrix. We argue that the optimization metric results in these LLMs learning to represent this probability matrix during training, and text generation is nothing but picking a multinomial distribution from a specific row of this matrix. This matrix, however is infeasible to be represented by the LLMs, even with billions of parameters, so the LLMs learn to approximate it. Further, the training data is a subset of the entire text in the world, so the learnt matrix is an approximation and reflection of the matrix induced by the training data, rather than the a representation of the ideal matrix. 

Next (Section~\ref{sec:embeddings}), we relate the rows of this matrix to the embeddings of the prompt and prove (Theorem~\ref{thm:continuity}) a result on the continuity of the mapping between the embeddings and the multinomial distribution induced by the embedding. We then prove (Theorem~\ref{thm:dirichlet}) that any prior over multinomial distribution can be represented as a finite mixture of Dirichlet distributions. 

We then argue, and demonstrate (Section~\ref{sec:icl-bayesian}) that text generation by the LLMs is a Bayesian learning mechanism, where the probability distribution generated for the next token prediction is the posterior induced by the prompt as the new evidence, and the pre-trained model as the prior. Next, we develop our model and theory further in Section~\ref{sec:bayesian-model} using a real world example of in-context learning. 

To empirically validate our model via experimentations in Section~\ref{sec:experiments}, we have instrumented the open-source Llama model to visualize next token probabilities during inference tasks. These visualizations provide empirical support for our Bayesian learning framework, demonstrating how LLMs progressively adapt their token distributions based on input prompts and examples. 

In Section~\ref{sec:embeddings-algebra}, we expand our discussion beyond in-context learning to explain the general operation of LLMs using the concept of embeddings. We provide an abstract representation of the embeddings that an LLM works with, encompassing both the embeddings of the prompt and the embeddings from the pre-trained model. This section provides an explanation on how LLMs navigate the high-dimensional embedding space during text generation. We discuss the process of distribution generation, contrasting in-context learning scenarios with general text completion. We also explain the token-by-token generation process and its implications for LLM behavior, including adaptability, coherence, contextual understanding, and limitations such as prompt sensitivity and hallucination.

We finally present some implications of our model (Section~\ref{sec:implications}) and conclude with open questions and directions to pursue (Section~\ref{sec:conclusions}).

In the next section we start with a model for text generation/occurrence in the real world.

\section{Text generation in the real world}

\subsection{The ideal generative text model}\label{sec:ideal-text}
We start with an abstract model of the \textit{entire} written text/knowledge that exists in the world, which we denote as $\mathcal{S}$. There is a finite vocabulary of the text, which we denote as $\mathcal{M}$, of size $|\mathcal{M}|=m$. Every word occurs with some probability $p_i$ in this corpus $\mathcal{S}$, with a multinomial distribution over the vocabulary $\mathcal{M}$, $(p_1, p_2, \cdots, p_i, \cdots, p_m)$ with a prior $u$. Suppose we pick a word randomly from this distribution, and say this word is ``Protein". Now, again there will be a generated multinomial distribution $u(\cdot |\text{``Protein"})$ over the vocabulary $\mathcal{M}$ given that the first word is Protein. We denote this multinomial distribution as U(``Protein"). This multinomial distribution will be sparse (a very small subset of words will follow ``Protein" over the vocabulary $\vocab$), and two words that will likely have a non-negligible probability are ``synthesis" and ``shake". If we sample the next word according to this multinomial distribution U(``Protein"), we will generate a conditional (conditioned on which word is picked) multinomial distribution U(``Protein synthesis") or U(``Protein shake") etc. The multinomial distribution for U(``Protein synthesis") will be dominated by terms related to biology, whereas the  multinomial distribution U(``Protein shake") will be dominated by terms related to exercise and gym. And we keep progressing down this tree, as depicted pictorially in Figure~\ref{fig:text-in-real-world}.
\begin{figure*}[htb]
    \centering
    \includegraphics[width=\textwidth]{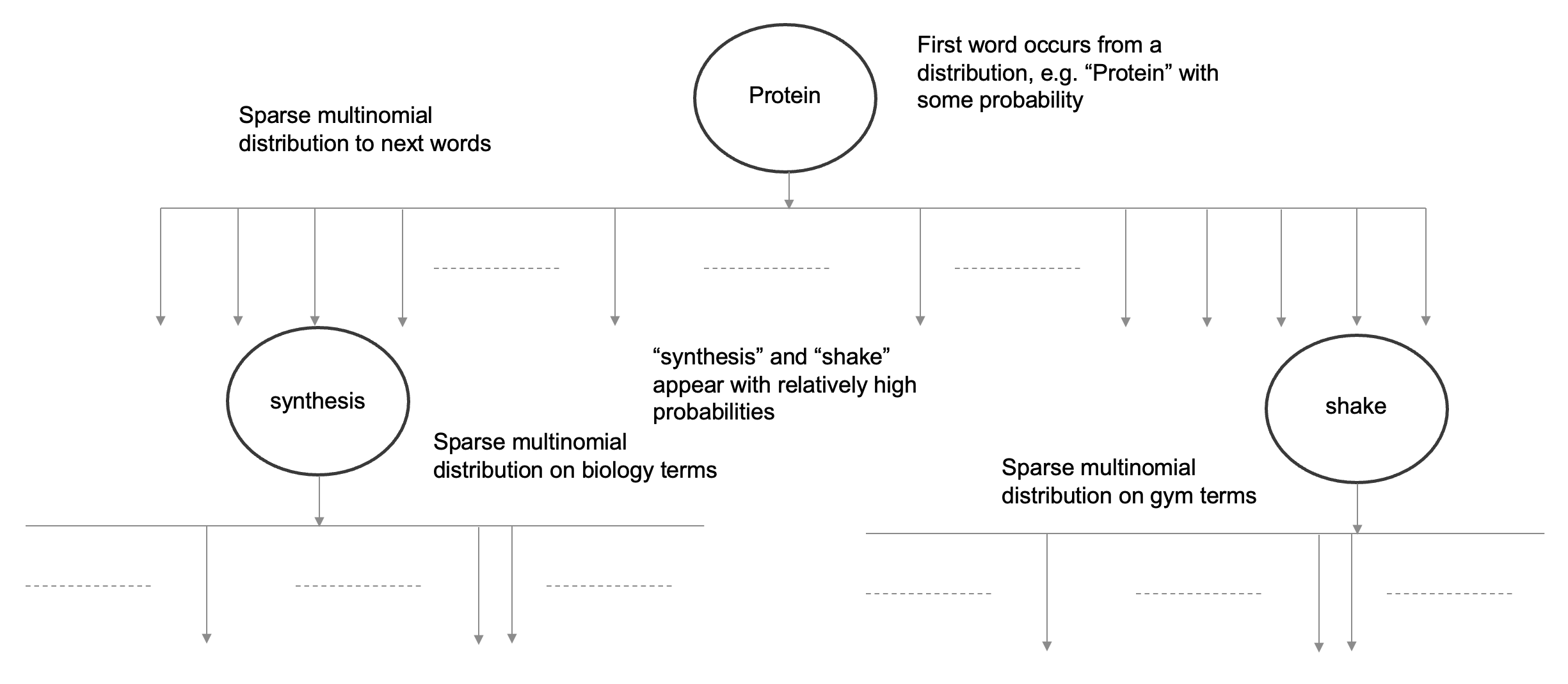}
    \caption{A conceptual model for text in the real world}
    \label{fig:text-in-real-world}
\end{figure*}

Now we can view the entire text corpus $\corpus$ generating different multinomial probabilities for each sequence of words (or ``prompts" as they are commonly referred to). If we consider a typical large language model (LLM) like ChatGPT, they will have a vocabulary of say 50,000 tokens (tokens are words/sub-words), and a prompt size that they respond to be about 8000 tokens. This results in a multinomial probability matrix of size $50000\times50000^{8000}$ as depicted in Figure~\ref{fig:prob-matrix}, where each row corresponds to a unique combination of 8000 tokens and each column is a token in the vocabulary of the LLM. This matrix is \textit{enormous}, more than the number of atoms across all galaxies. Fortunately, it is very sparse in practice as an arbitrary combination of tokens is likely gibberish, and occurs with 0 probability. Even for rows that occur with non-negligible probability, the column entries in that row are also very sparse, since most entries of the multinomial distribution would be zero (``Protein synthesis" is unlikely to be followed by ``convolutional networks" etc.). However, even with the row and column sparsity, the size remains beyond the capacity to accurately represent and so practical generative text models are built on several approximations, and we will go over them in the next section. At a fundamental level, LLMs are trying to compactly represent this Probability matrix, and given a prompt, they attempt to recreate the multinomial distribution in the row corresponding to the prompt.
Note that these LLMs are trained to ``predict the next token (accurately)'' as the (optimization) objective. With that objective, the loss function used during training is the cross entropy loss function. It is straightforward to show that in the ideal scenario, the optimal multinomial distribution that they generate $\hat{u}(\cdot| \text{``Prompt"})$, should match the empirical multinomial distribution $u(\cdot| \text{``Prompt"})$ that exists in the training corpus $\corpus$, since the cross entropy $H(p,q)$ is minimized when $p\equiv q$, i.e. the generated and empirical distributions match. However, as stated earlier, this ideal is impossible to achieve in practice. In the next section we look at how LLMs work, and the approximations that are involved in practical settings.

\begin{figure}[htb]
    \centering
    \begin{tikzpicture}[scale=0.7, transform shape]
        \draw (0,0) rectangle (4,8);

        \node[anchor=west] at (4.5,4) {$50,000^{8000}$ rows};
        \node[anchor=south] at (2,8.5) {$50,000$ columns};

        \matrix (mat) at (0.30,4.6) [row sep=0.1cm,column sep=0.1cm] {
            {\tiny ${p}_{i1} {p}_{i2} \ldots \ldots \ldots \ldots {p}_{im}$} & & & \\
            & & & \\
            & & & \\
            & & & \\
        };


        \node[anchor=east] at (-0.5,2.4) {``The cat sat on the.."};
        \node[anchor=east] at (-0.5,4.6) {``Protein synthesis"};
        \node[anchor=east] at (-0.5,5) {``Protein shake"};

        \node at (2.5,0.5) {$\cdots$};
        \node at (0.5,3.5) {$\vdots$};
        \node at (2.8,3.8) {$\ddots$};
    \end{tikzpicture}
    \caption{Probability Matrix}
    \label{fig:prob-matrix}
\end{figure}
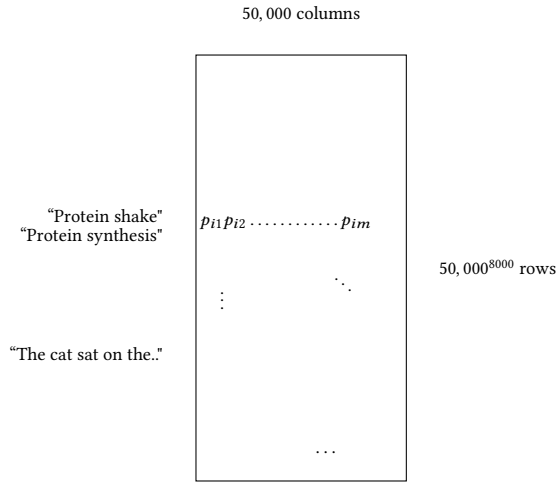

\subsection{Real world LLMs}

In an abstract sense, Large Language Models (LLMs) work by using a given prompt to locate a specific row in the probability matrix. From this row, they extract a multinomial distribution, which then guides the selection of the next token by sampling from this distribution. As an example, for the prompt ``The cat sat on the", the LLM generates a multinomial distribution as shown in Figure~\ref{fig:multinomial}. The tokens ``mat" and ``couch" have the highest probabilities, and tokens ``courage" or ``the" have (extremely) low probabilities. This token is added to the prompt, and the process repeats, with the updated prompt leading to a new row in the matrix, continuing token generation in a sequential manner. 
\begin{figure}[htb]
    \centering
    \includegraphics[width=0.6\linewidth]{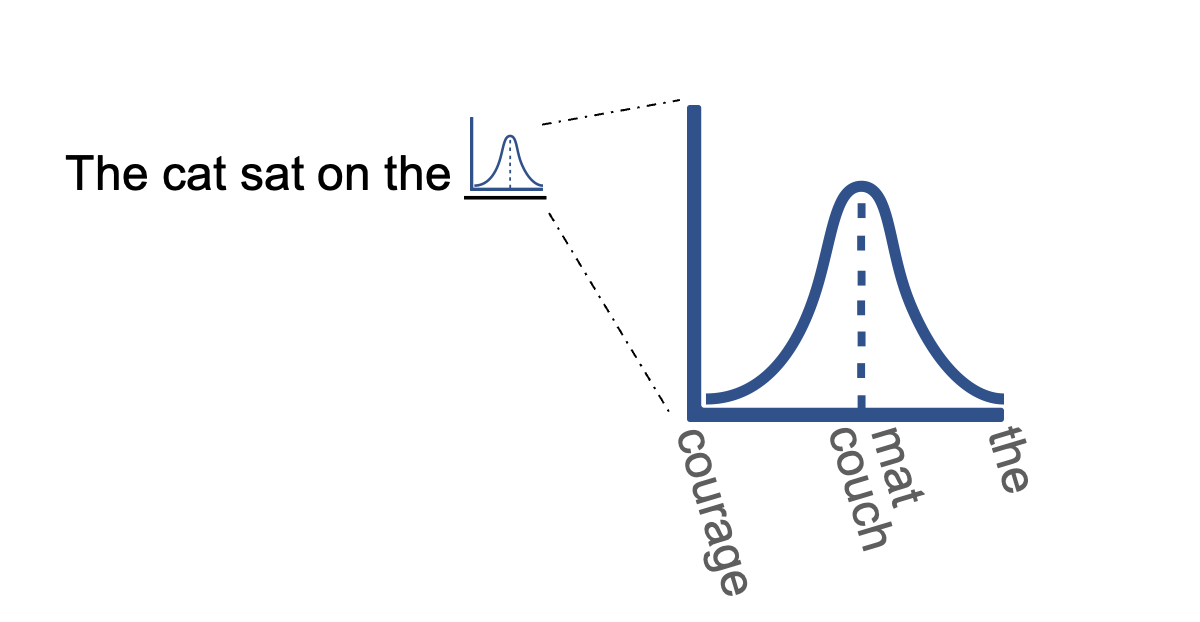}
    \caption{Multinomial distribution produced by an LLM in response to a prompt}
    \label{fig:multinomial}
\end{figure}

The perfect probability matrix contains rows for all the text that is found (or can be generated) in the world, however LLMs can only create an approximation of it using the training set, which is a subset,
$\traincorpus$, of the full corpus 
$\corpus$. The behavior of an LLM depends on the selection of 
$\traincorpus$. So the first approximation that affects the performance of the LLMs is the incompleteness of the training set. A second approximation involves the the representation of the matrix that is generated from training on this incomplete set. 

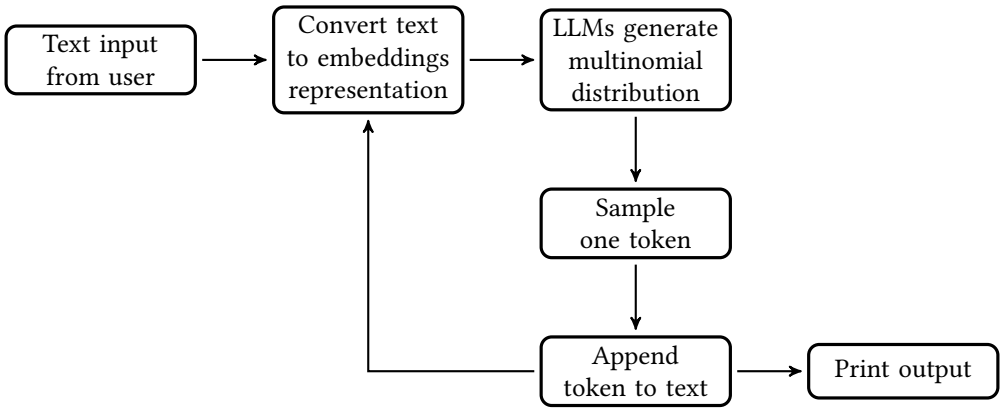
\begin{figure*}[ht!]
\centering
\begin{tikzpicture}[node distance=1cm, auto]
 \node[punkt] (input) {Text input from user};
 \node[punkt, right=of input] (embeddings) {Convert text to embeddings representation};
 \node[punkt, right=of embeddings] (llm) {LLMs generate multinomial distribution};
 \node[punkt, below=of llm] (sample) {Sample one token};
 \node[punkt, below=of sample] (append) {Append token to text};
 \node[punkt, right=of append] (print) {Print output};
 
 
 \draw[pil] (input) -- (embeddings);
 \draw[pil] (embeddings) -- (llm);
 \draw[pil] (llm) -- (sample);
 \draw[pil] (sample) -- (append);
 \draw[pil] (append) -- (print);
 \draw[pil] (append.west) -| (embeddings.south);
\end{tikzpicture}
\caption{Functional blocks of an LLM based generative text system}
\label{fig:llm_architecture_feedback}
\end{figure*}
The other approximation comes from representing text as embeddings. 
A short primer on how LLMs like gpt use embeddings for representation is as follows:
LLMs begin processing text by converting sequences of characters or tokens into a fixed-dimensional space where each unique token is represented by a high-dimensional vector, known as an embedding. This representation captures semantic and syntactic properties of the language, enabling the model to understand the contextual relationships between tokens. 

For instance in the transformer ~\cite{NIPS2017_3f5ee243} architecture the embeddings serve as the input layer, which employs an attention~\cite{bahdanau2015neural} mechanism to weigh the influence of different parts of the input text when predicting the next token. The attention mechanism allows the model to focus on relevant tokens for each prediction step, regardless of their position in the input sequence, thereby enabling the handling of long-range dependencies and variable-length input. This representation is then used downstream to generate the multinomial output distribution corresponding to the text input, however for our model in this paper we will abstract out the specifics of an architecture like the Transformer, and only assume that the input to the architecture is an embedding vector representing the prompt.

A decomposition of the functional blocks of an (autoregressive) LLM based generative text model is shown in Figure ~\ref{fig:llm_architecture_feedback}. Text is entered by the user as a prompt, it is converted into embeddings by the LLM, then the LLM processes the embeddings as an input, produces an output multinomial distribution based on the embeddings and samples the next token from this distribution. The next token is appended to the prompt, converted into embeddings again and the process repeats until the next token picked corresponds to ``end of response".

The key to understanding how In-Context learning or in general text generation works is to analyze how the networks respond to prompts and is similar to the question of out of domain generalization capabilities of classifiers in deep learning~\cite{Kawaguchi_2022}. In the next sections we argue and demonstrate that all text generation in LLMs is consistent with a form of Bayesian learning, and In-Context learning is a special case of that.

\section{Embeddings and Prior Approximations}\label{sec:embeddings}
\subsection{Preliminaries}
In our model, each prompt has the corresponding representation in its embedding. Let $\mathscr{E}$ be the space of embeddings. For example  $\mathscr{E} = \mathscr{R}^r$. We observe a finite number of embeddings, say $e_1,...,e_n$ and each $e_i$ is mapped to a next token multinomial probability vector of the size of the vocabulary $m$,  say $({p_e}_{i1},...,{p_e}_{im})$, ${p_e}_{ij} \geq 0, \sum {p_e}_{ij} =1$. Let the space of such probability vectors be $\mathscr{P}$. 
We assume $\mathscr{E}$ is a metric space. 
 
 Suppose  $T$ maps the embeddings to $\mathscr{P}$ by a convexity preserving transformation. That is, $T(\alpha e_1 + (1-\alpha ) e_2) = \alpha T(e_1) + (1- \alpha) T(e_2)$. Consider the $\mathscr{L}_p$ metric for any $p \geq 0$ on $\mathscr{P}$. $T$ is clearly bounded in this metric by $1$. 

 \subsection{Continuity of Embedding Mapping:}\label{sec:continuity}
 \begin{theorem}[Continuity] \label{thm:continuity}
     If the mapping $T$ is convexity preserving and bounded, then it is continuous.  \end{theorem}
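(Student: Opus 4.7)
The plan is to use the affinity of $T$ (which is what the convex-combination equation in the hypothesis says, with $\alpha$ ranging over $[0,1]$) together with the uniform $\mathscr{L}_p$-bound on $\mathscr{P}$ to obtain Lipschitz continuity on any ball contained in $\mathscr{E}$. The mental picture is that an affine map is completely determined by its values on any line segment, so once $T$ is controlled on a unit segment emanating from $e_0$, it is controlled on every rescaling of that segment; boundedness then forces small rescalings to produce small increments in the image.

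Concretely, I would fix $e_0 \in \mathscr{E}$ and a nearby point $e$ with $\alpha := \|e - e_0\|_{\mathscr{E}} < 1$, set $u = (e - e_0)/\alpha$, and rewrite $e$ as the genuine convex combination
\[ e \;=\; \alpha(e_0 + u) + (1 - \alpha)\,e_0, \]
assuming $e_0 + u \in \mathscr{E}$ (automatic when $\mathscr{E} = \mathscr{R}^r$). Applying the convexity-preserving hypothesis then gives
\[ T(e) - T(e_0) \;=\; \alpha\bigl(T(e_0 + u) - T(e_0)\bigr), \]
and since every $\pi \in \mathscr{P}$ satisfies $\|\pi\|_p \leq 1$, the triangle inequality yields
\[ \|T(e) - T(e_0)\|_p \;\leq\; 2\alpha \;=\; 2\,\|e - e_0\|_{\mathscr{E}}. \]
This is $2$-Lipschitz behavior in a neighborhood of $e_0$, hence continuity at $e_0$; since $e_0$ was arbitrary, $T$ is continuous on all of $\mathscr{E}$.

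The main obstacle is not the core calculation but the domain bookkeeping: writing the displayed convex combination requires the reference point $e_0 + u$ to belong to $\mathscr{E}$. This is trivial in the running example $\mathscr{E} = \mathscr{R}^r$, and more generally whenever $e_0$ lies in the algebraic interior of a convex $\mathscr{E}$. At a boundary point one would instead pick a reference $e' \in \mathscr{E}$ along the ray from $e_0$ through $e$ and re-express $e$ as a convex combination of $e_0$ and $e'$, yielding a local Lipschitz constant that depends on $\|e' - e_0\|_{\mathscr{E}}^{-1}$; the mechanism is identical. One final remark worth noting is that in the unrestricted case $\mathscr{E} = \mathscr{R}^r$ the same identity applied with reference point $e_0 + \lambda u$ for arbitrarily large $\lambda$ forces $T(e_0 + h) = T(e_0)$, so $T$ is in fact constant, and continuity becomes a rather modest consequence of the hypotheses.
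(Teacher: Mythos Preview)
Your argument is correct and follows the same ray-based mechanism as the paper's proof: express a nearby point as a convex combination of $e_0$ and a reference point on the ray, apply the convexity-preserving identity, and bound the increment using $\|\cdot\|_p \le 1$ on $\mathscr{P}$. Your version is more explicit, extracting a local $2$-Lipschitz bound rather than arguing only qualitatively along rays, and your closing remark that on $\mathscr{E}=\mathscr{R}^r$ the hypotheses in fact force $T$ to be constant is a correct (and telling) observation the paper does not make.
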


\begin{proof}
 Consider any two points $x$ and $y$ in $\mathscr{E}$. Define  $x_{\alpha} = \alpha y + (1-\alpha ) x$. This defines a ray from $y$ to $x$. The corresponding $T$ mapping by the convexity preserving property is $\alpha T(y) + (1- \alpha ) T(x)$. Clearly as $\alpha \rightarrow 0, \, x_ \alpha \rightarrow x$ and $T(x_{\alpha}) \rightarrow T(x) $ by the boundedness of $T(y)$. So the continuity is established along every ray. Now consider any arbitrary sequence $(x_n, n=1,2,...) \rightarrow x$, then each of the point is on some ray to $x$. Thus establishing continuity for any sequence.
\end{proof} 

The above theorem allows us to approximate any new multinomial distribution induced by an unseen embedding with the multinomial distributions induced by known embeddings as long as the operation is convex linear combination; for example by the nearest $k-means$ procedure. 

Note that while we make this assumption about the convexity preservation mapping of embeddings to the multinomial distribution which leads to our continuity theorem, this property is important for the well-posedness of the posterior distribution in Bayesian statistics with respect to measurement errors, and has been shown in ~\cite{dolera2023lipschitz}. Additionally, the convexity preserving property can be viewed as picking one embedding with probability $\alpha$, and the other with probability $1-\alpha$, and the linearity of expectation implies that the associated distributions induced by those embeddings also preserve the same weighting in expectation. Informally, this property leads to well-behaved LLMs that don't have ``wild'' outputs.

\subsection{Prior Approximation:}
In the rest of the paper, we will be using Bayesian priors for the multinomial probabilities in the matrix. We now state a general result that simplifies computation by approximating any arbitrary prior by a mixture of Dirichlet distributions. The advantage of this approximation is that Dirichlet distributions are natural conjugate priors for multinomial and thus analytic calculations are easy to carry out for them and their mixtures.  The proof is rather technical and is deferred to the appendix.

\begin{theorem}[{Dirichlet approximation}] \label{thm:dirichlet} Any continuous bounded prior over $multinomial$ probabilities, $u(p_{1},p_{2},\cdots p_{m})$, can be approximated as a finite mixture of Dirichlet distributions $\mathcal{D}$.

\end{theorem}

That is,
\begin{equation}
u(P)\approx \sum_{k=1}^{n} b_{k,n}\mathcal{D}((p_1,p_2,\dots,p_m)|(\alpha_{k,1},\alpha_{k,2},\dots,\alpha_{k,m}))
\end{equation}
for some mixing constants $b_{k,n} \geq 0, \sum_k b_{k,n}=1$, and parameters $(\alpha_{k,1},\alpha_{k,2},\dots,\alpha_{k,m})$. For details see the Appendix.  Further, the posterior after observing any multinomial outcomes is also a mixture with reweighted $b's \, and \, \alpha's$. 

It is important to relate \textit{Theorems 1} and \textit{2} in that if a mapping $T$ satisfies \textit{Theorem 1}, and leads embeddings $E_i \in \mathscr{E} $ to probability distributions  $P_i \in \mathscr{P}$ that are mixtures of Dirichlet,  then any convex linear combination of the embeddings under $T$ will also lead to a mixture of Dirichlet. That is a convex hull of the class of embeddings will be mapped to a convex hull of the class of mixtures of Dirichlet. More critically, the same thing will happen with the posterior embeddings and the posterior probability distributions. In the next sections for illustrative purposes we typically will use Dirichlet distributions with the understanding that the results generalize to Mixtures and consequently to arbitrary priors.

\section{Text generation and Bayesian learning}
We argue that text generation by LLMs is consistent with a Bayesian learning process. When an LLM is fed a prompt, it goes through two steps. First, whatever current representation it has stored of the matrix, it locates the embeddings ``closest'' to the the embedding of the prompt, and the approximation via Theorems ~\ref{thm:continuity} and ~\ref{thm:dirichlet} of the multinomial distribution serves as the prior for the Bayesian learning. Next, the embedding of the prompt itself is treated as new evidence (likelihood) and the two are used to compute the posterior which is then used as the multiniomial distribution for the next token prediction. Note that if the prompt is an embedding the LLM has been trained on, then the Bayesian learning simply returns the prior distribution as the posterior (this is also the most efficient learning process during training, to minimize cross entropy loss). When the prompt contains something ``new'', the posterior adjusts to this new evidence. This process is depicted in Figure ~\ref{fig:bayesian-updating}. How efficiently and accurately the posterior adjusts depends on the size of the LLM, and in the next subsections we show that In-Context learning within LLM models is consistent with Bayesian learning. 
\begin{figure*}[htbp]
\centering
\begin{tikzpicture}[node distance=1.5cm and 1.5cm, auto, box/.style={draw, rounded corners}]
    \node[box] (prompt) {Prompt};
    \node[box, right=of prompt] (embedding) {Embedding};
    \node[box, below=of prompt] (likelihood) {Likelihood};
    \node[box, above right=of embedding, text width = 3.7cm,  xshift=-0.5cm] (llm) {LLM pre-trained model, \\ Theorems~\ref{thm:continuity} and \ref{thm:dirichlet}};
    \node[box, below=of llm] (approx) {Approximate Prior};
    \node[box, below =of approx] (bayes) {Bayes update};
    \node[box, right=1.5cm of bayes] (posterior) {Posterior Distribution};

    \draw[-stealth] (prompt) -- (embedding);
    \draw[-stealth] (prompt) -- (likelihood.north);
        \draw[-stealth] (embedding) |- (llm.west);

    \draw[-stealth] (likelihood.east) |- (bayes);
    \draw[-stealth] (llm.south) -- (approx.north);
    \draw[-stealth] (approx.south) -- (bayes);
    \draw[-stealth] (bayes) -- (posterior);
\end{tikzpicture}
\caption{Conceptual Bayesian updating of next token multinomial probability}
\label{fig:bayesian-updating}
\end{figure*}
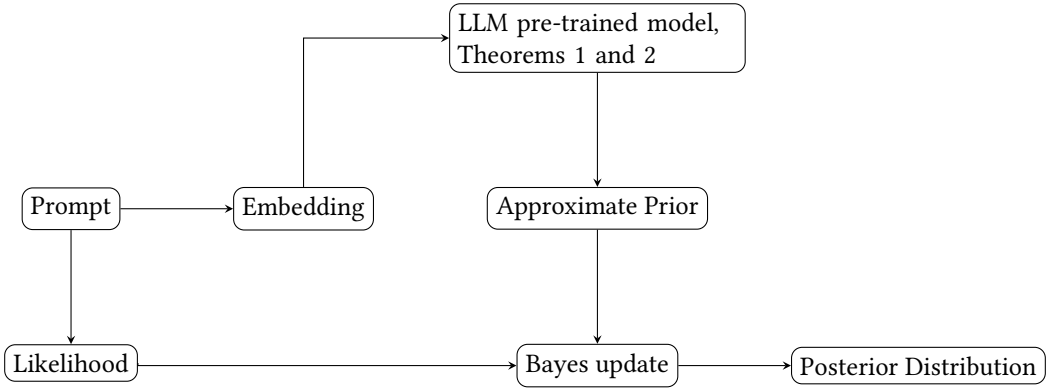

\subsection{In Context Learning - Preliminaries}
In-Context learning is a technique whereby task-specific responses are generated by giving task specific prompts to a LLM. There are many ways to do it, either by 0 shot or few shot learning. In-Context Learning can be classified in three broad categories according to~\cite{wei2023larger}:

\begin{itemize}
\item Regular In Context Learning
\item Flipped-Label In-Context Learning
\item Semantically Unrelated In-Context Learning (SUIL)
\end{itemize}

\begin{figure}[htb]
    \centering
    \includegraphics[width=0.6\linewidth]{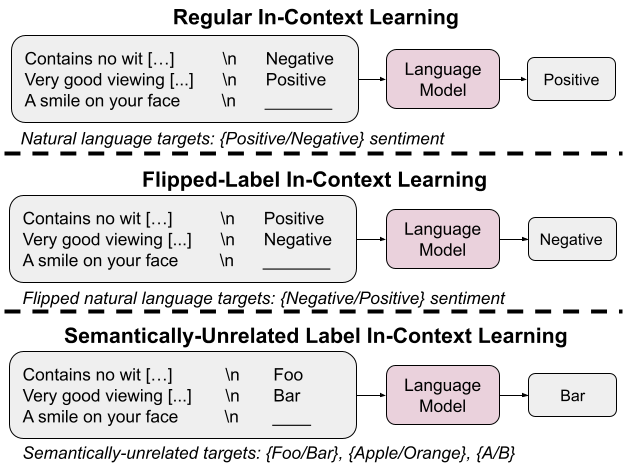}
    \caption{Types of In-Context Learning- from~\cite{wei2023larger} }
    \label{fig:enter-label}
\end{figure}

Basically, compared to the regular In-Context Learning, Flipped-Label In-Context Learning allows LLM prompts to have flipped labels so that the labels are opposite to the pre-trained LLM model. For example in the sentiment analysis task, ``positive" is flipped to ``negative" while prompting and vice versa. In Semantically Unrelated In-Context Learning (SUIL), ``positive" is converted for example to ``foo" and ``negative" is converted to ``bar"; both semantically unrelated in the pre-trained model. 

What is surprising is that LLMs are able to deal with these inconsistencies and are able to adapt to the new labels rather quickly depending upon the number of parameters of the LLM. In the following sub-section we introduce the Bayesian paradigm using Dirichlet distributions and show how this behavior of LLM is consistent with the Bayesian Learning. 

\subsection{Bayesian Learning is In-context learning}\label{sec:icl-bayesian}
We start with the simpler case of SUIL with two labels $A \, B$, where only one label is changed from $A$ to $B$.  Since at any stage the auto-generation is done by the $multinomial$ distribution corresponding a row of the matrix, if we were to just consider the occurrence of the labels $A$ and $B$, the corresponding distribution will be a $Binomial(n,p)$ where $n$ is the sample size, and $p$ is the corresponding occurrence probability. For our exposition, besides assuming the occurrence of the labels $A,B$ is $Binomial$, for the Bayesian setting we further assume that it has a $Beta$ prior. The result in this section holds irrespective of this assumption since any prior distribution over $Binomial$ probability can be approximated by mixtures of $Beta$ priors as proven in Theorem 4.1. It also indicates that the entire development can be generalized to $multinomial$ distribution and $Dirichlet$ prior and mixtures of $Dirichlet$ priors when multiple labels are changed. 

Recall that if $X \sim Bin(n,p)$ with the prior over $p$, $p \sim Beta (\alpha, \beta)$, and a fixed $n$, then, $$p | n,x \sim Beta(\alpha+x, \beta +n -x)$$ with the corresponding posterior mean, $E(p |n,x)=(\alpha + x)/(\alpha + \beta +n)$, and the posterior variance $Var(p|x,n)=((\alpha + x)(\beta +n-x))/((\alpha + \beta +n)^2 (\alpha + \beta +n +1))$ which is $O(n^2)$. Further, $\alpha +\beta$ can be considered as the prior sample size of $A$ and $B$ , and $\alpha$ is the prior occurrence of $X_A$. See the related discussion in~\cite{dalal1983approximating}.

Now consider the most unrestricted case of the SUIL, namely, where the LLM is trained on the label $A$ with rare occurrences of the label $B$. \\

Consider now the auto-generated response conditional on $A$ occurring according to the training data of the base LLM. Then the training distribution for $A$ and $B$ can be represented as a Binomial distribution; $(p_A, p_B),\, p_A+p_B=1$ with a prior for ($ p_A, p_B)$ to be $Beta(\alpha_A,\beta_B)$. Since the training data from the LLM is mainly based on the label $A$ with a rare occurrence of $B$, we will have $\alpha_A \gg \beta_B$. Thus, $E(p_A|n=0)=\alpha_A/(\alpha_A+\beta_B) \approx 1$ and $E(p_B|n=0)=\alpha_B/(\alpha_A+\beta_B) \approx 0$. Further, since LLMs are trained on many labels, $\alpha_A$ would be relatively small, though much greater than $\beta_B$.\\

Now consider In-Context Learning for SUIL. Here we are replacing $A$ by $B$ in $n$ prompts, thus we have $x_B=n$ prompts of $B$ and $x_A=0$ prompts of $A$, with all other context remaining the same.  In this case 
$$E(p_A|x_A,x_B)=\alpha_A/(\alpha_A+\beta_B +n), \, and,$$
$$E(p_B|x_A,x_B)=(\beta_B +n)/(\alpha_A+\beta_B +n)$$
So, clearly as the number of prompts $n$ for the label $B$ increases, $E(p_B|x_A,x_B) \rightarrow 1$ and $E(p_A|x_A,x_B) \rightarrow 0$. 

We examine the qualitative behavior of this convergence, in the following table with $\alpha =0.3 \, , \beta =0.01$. Here $\alpha_A/\beta_B =30$. As can be observed, with \textbf{only three flipped examples}, the posterior adjusts to nearly flip the probabilities of the labels from the pre-trained prior.

\begin{table}[h!]
\begin{center}
\begin{tabular}{||c c c ||} 
 \hline
 n & $E(p_A |n)$ &  $E(p_B |n)$  \\ [0.5ex] 
 \hline\hline
 0 & 0.968 & 0.032  \\ 
 \hline
 1 & 0.229 & 0.771  \\
 \hline
 2 & 0.13 & 0.87  \\
 \hline
 3 & 0.091 & 0.909  \\ [1ex] 
 \hline
\end{tabular}
\caption{Behavior of $E(p_A |n),E(p_A |n)$ with $n$ prompts and $\alpha =0.3 \, \beta =0.01$ }
\label{table:1}
\end{center}
\end{table}

A similar behavior persists if $\alpha_A \ll n$. Further, to examine the asymptotic behavior consider
$$E(p_A|n)/E(p_A|n=0)=1/(1+n/(\alpha_A + \beta_B)).$$ 
This suggests even if $\alpha_A \gg \beta_B$ resulting $E(p_B|n=0) \approx 0$, as long as $(\alpha_A + \beta_B)$ is small, the In-Context Learning even in SUIL case will be very fast. 

By analogy similar results hold for other categories of the In-Context Learning and also when multiple labels are being replaced. Finally, since  Chain-of-Thought~\cite{wei2023chainofthought} learning is a type of In-Context Learning, the same results apply to it.

\cite{wei2023larger} empirically shows that the property of the adaptation to the Flipped-label In-Context Learning and SUIL depends upon the size of the model- the larger models learn better than smaller models. The Bayesian Learning also mimics this behavior by increasing $(\alpha_A + \beta_B)$, that is increasing the prior sample size resulting in more peaked distribution around the labels. Table 2 shows with $\alpha =3 \, \beta =0.1$. Here $\alpha_A/\beta_B $ still remains $=30$. Unlike the previous example, the posterior here is slow to adjust, and the probabilities of the two labels remain nearly equal after 3 examples whereas in the previous example they had nearly flipped.

\begin{table}[ht!]
\begin{center}
\begin{tabular}{||c c c ||} 
 \hline
 n & $E(p_A |n)$ &  $E(p_B |n)$  \\ [0.5ex] 
 \hline\hline
 0 & 0.968 & 0.032  \\ 
 \hline
 1 & 0.732 & 0.268  \\
 \hline
 2 & 0.588 & 0.412  \\
 \hline
 3 & 0.492 & 0.508  \\ [1ex] 
 \hline
\end{tabular}
\caption{Behavior of $E(p_A |n),E(p_A |n)$ with $n$ prompts and $\alpha =3 \, \beta =0.1$ }
\label{table:2}
\end{center}
\end{table}

This behavior can be intuitively explained since the larger models tend to have many more parameters, thus during the training they are acquiring more general knowledge scattering the probabilities across many more labels and parameters. This will result in smaller $\alpha_A + \beta_B$ for any two labels, and our model explains the emergence of In-Context learning with larger models, as observed in ~\cite{wei2023larger} amongst others. We now move from this toy example to a more concrete, real world example of in-context learning to develop our theory.

\section{In-Context Learning With Prompts and Bayesian Updating}\label{sec:bayesian-model}
This section is motivated by a real world example related to a domain specific language (DSL) designed to interpret natural language questions related to statistics in the game of cricket and convert them into a structured output (resembling json). This particular example is chosen because this DSL is not part of training data of LLMs and so is completely unknown to them at the time of prompting. However, with a few examples, the LLMs learn the DSL in-context and are able to convert a new query into the DSL. In this section we go through this process and demonstrate the LLMs learning via the evolution in next token probabilities. Table~\ref{tab:queries_outputs} is an example of $K$ examples of the NL->DSL translation and this is followed by the query: 
``highest losing team total in Tournament0''\footnote{in the DSL actual names of players, tournaments, venues are abstracted out into generic labels}

\begin{table}[h]
\centering
\begin{tabular}{|p{0.45\textwidth}|p{0.45\textwidth}|}
\hline
\rowcolor{blue!80!black}\color{white}\textbf{Natural Language Query} & \color{white}\textbf{DSL representation} \\
\hline
\rowcolor{cyan!30}
Tournament0 team with best win loss record after losing the toss & 
\{'orderby': ['win\_loss\_ratio'], 'toss': ['lost'], 'tournament': ['Tournament0'], 'type': ['team']\} \\
\hline
\rowcolor{cyan!30}
lowest team total & 
\{'groupby': ['innings'], 'orderby': ['runs'], 'sortorder': ['reverse'], 'type': ['team']\} \\
\hline
\rowcolor{cyan!30}
biggest Tournament0 total in defeat & 
\{'groupby': ['innings'], 'orderby': ['runs'], 'result': ['loss'], 'tournament': ['Tournament0'], 'type': ['team']\} \\
\hline
\rowcolor{cyan!30}
highest scores by Team0 & 
\{'groupby': ['innings'], 'orderby': ['runs'], 'team': ['Team0'], 'type': ['team']\} \\
\hline
\end{tabular}
\caption{Few shot examples of NL->DSL}
\label{tab:queries_outputs}
\end{table}

In response the LLM generated the following completion: 
$$\{'groupby': ['innings'], 'orderby': ['runs'], 'result': ['loss'], 'tournament': ['Tournament0'],$$ $$ 'type': ['team']\} $$

Another similar example is the following: ``Person0 batting record in the power plays of Tournament0 in the Tournament1'' and the $K$ examples are shown in Table~\ref{tab:queries_outputs_extended}.

\begin{table}[htbp]
\centering
\begin{tabular}{|p{0.4\textwidth}|p{0.55\textwidth}|}
\hline
\rowcolor{blue!80!black}\color{white}\textbf{Natural Language Query} & \color{white}\textbf{DSL representation} \\
\hline
\rowcolor{cyan!30}
Team0 batting average in powerplays Tournament0 Season0 & 
{'groupby': ['nodefault', 'opposition'], 'opposition': ['Team0'], 'orderby': ['average'], 'overs\_type': ['powerplay'], 'season': ['Season0'], 'team\_view': ['bowl'], 'tournament': ['Tournament0'], 'type': ['team']} \\
\hline
\rowcolor{cyan!30}
Person0 batting record in the death overs against Person1 & 
{'batsman': ['Person0'], 'bowler': ['Person1'], 'groupby': ['bowler'], 'overs\_type': ['death'], 'type': ['batting']} \\
\hline
\rowcolor{cyan!30}
Person0 phase wise batting record in the Tournament0 & 
{'batsman': ['Person0'], 'groupby': ['nodefault', 'over\_group'], 'tournament': ['Tournament0'], 'type': ['batting']} \\
\hline
\rowcolor{cyan!30}
most runs by Person0 in the powerplays in the Tournament0 & 
{'batsman': ['Person0'], 'groupby': ['match', 'nodefault'], 'overs\_type': ['powerplay'], 'tournament': ['Tournament0'], 'type': ['batting']} \\
\hline
\rowcolor{cyan!30}
Person0 batting record in the powerplay overs in the Tournament0 & 
{'batsman': ['Person0'], 'overs\_type': ['powerplay'], 'tournament': ['Tournament0'], 'type': ['batting']} \\
\hline
\rowcolor{cyan!30}
Person0 against Person1 in each season of Tournament0 & 
{'groupby': ['nodefault', 'season'], 'opponent': ['Person1'], 'player': ['Person0'], 'tournament': ['Tournament0'], 'type': ['primaryrole']} \\
\hline
\rowcolor{cyan!30}
highest batting score all out in Tournament0 & 
{'event': ['All Out'], 'groupby': ['innings'], 'orderby': ['runs'], 'tournament': ['Tournament0'], 'type': ['team']} \\
\hline
\rowcolor{cyan!30}
Person0 in the powerplay in the Tournament0 Season0 against Season1 & 
{'groupby': ['season'], 'overs\_type': ['powerplay'], 'player': ['Person0'], 'season': ['Season0', 'Season1'], 'tournament': ['Tournament0'], 'type': ['primaryrole']} \\
\hline
\end{tabular}
\caption{Natural Language Queries and their DSL representations}
\label{tab:queries_outputs_extended}
\end{table}

In response the LLM generated the following completion: 
$$\{'batsman': ['Person0'], 'groupby': ['nodefault', 'over\_group'], 'overs\_type': ['powerplay'], $$ $$ 'tournament': ['Tournament0', 'Tournament1'], 'type': ['batting']	\}$$

Looking at the above exchange, the structure is evident. The task consists of a sequence of queries followed by their answers. The final query is to be completed by the LLM based on learning from three sources:
\begin{enumerate}
    \item The previous queries
    \item Their answers
    \item The original matrix $\tau$, which consists of multinomial probabilities generated by the LLM's training data
\end{enumerate}

Some further notations are needed to describe DSL. Let us assume that we have $K$ query-answer pairs, $(q_i,a_i), i=1, \dots , K$, are pre-cursor to a final query $Q$ which is to be answered by LLM as $A$. Let $q_i$ consists of distinct tokens  $(t_{i,1}, \dots, t_{i,r_i})$ while the corresponding answer $a_i$ consists of tokens $(s_{i,1}, \dots, s_{i,r_i})$. Further, let $\mathcal{Q} = \cup_i q_i=\{t_1,\dots, t_r\}$, $\mathcal{A} = \cup_i a_i=\{s_1,\dots,s_r\}$. 

For simplicity of exposition, we make the following assumption. 

\textbf{Assumption 1:} For a given  $t$ in a query $q$, there is corresponding $s_t$ in the corresponding answer $a$. Further,  $Q \subset \mathcal{Q}$, $A \subset \mathcal{A}$ and every pair $(t,s)$ belongs to at least one $(q_i,a_i)$ pair. We also assume that tokens in $Q$ are distinct. 

\textbf{Comment 1:} Clearly, each of the answer $a_i$ is the corresponding query dependent. If not, then nothing can be learnt from ($q_i, a_i$) pairs. In effect there is a mapping from $t$ to $s$ which is created by the prompt designer. Further, $t$ and $s$ can be compound sub-tokens for simplicity of presentation. For example, in q1, "most sixes" is being converted in a1 to 'orderby':['sixes'], 'Tournament0' is converted in 'tournament':["Tournament0"], etc.  

\textbf{Comment 2:} $Q$ can have tokens which are not in $\mathcal{Q}$. In that case the corresponding token needs to be replaced by a different token closest to it from the embedding space per Theorem 1, e.g., $'\textbf{defeat}' \approx ~'\text{loss}'$. Also, it may be necessary to remove  \textit{stopwords} like $'\text{the}'$. For the rest of the discussion for simplicity we assume there is an exact match between every $(t,s)$ in at least one $(q_i,a_i)$.

Under the Assumption 1, $A$ is to be learnt from $(q_i,a_i)$ pairs and $Q$ based on LLM training data. According to the Bayesian paradigm we have described before, we have

\textbf{Assumption 2:} $\mathcal{Q}=\{t_1,\dots, t_r\}$ is distributed as multinomial with probabilities $\{p_1,\dots, p_r\}$, and a prior $\mathcal{D}(\alpha_1,\dots,\alpha_r)$, where, $\alpha_i$ are based on the LLM trained data coming from $\tau$. 

\textbf{Comment 3:} One can have an arbitrary prior $u(p)$, which by our section on Dirichlet approximation can be approximated by a mixture of Dirichlet distributions. For  simplicity of exposition and without a loss of generality, we assume that $u(p)$ is a Dirichlet distribution 
$\mathcal{D}(\alpha_1, \alpha_2,\dots )$ where $\alpha_i$ are based on the pre-trained model. 

In the following, 
since $Q$ has to learn from relevant prompts $q_i$, we need to find $q_i$ nearest to $Q$ by computing the probability $q_i$ given $Q$ and then some other $q_j$ with the residual of $Q$, etc. The following allows to compute any such probabilities in a more general setting which will be repeated applied.   

\begin{theorem} Let $T$ and $T*$ be sequences of tokens in pre-trained LLM ($\tau$ and Dirichlet prior), and let $T \cup T*$ consist of tokens $t_1,\dots,t_v$ with the Dirichlet prior parameters $\alpha_1,\dots,\alpha_v$ for some $v$. Then
$$P(T* | T)=\frac{\prod_{t \in T* \cap T}(\alpha_t +1)\prod_{t \in T* \cap T^c}
\alpha_t}{\prod_{j=0}^{|T*|-1} (\alpha^* + j +|T|)} \, 
where, \, \alpha^*=\sum_{i=1}^m \alpha_i \, and \, |T|=cardinality \,of \, T$$. 
\label{thm: generative probability}
\end{theorem} 

\begin{proof}
Since we are computing the probability $P(T* | T)$ we just need to look at the joint distribution of tokens in $T*$ and $T$, which is Dirichlet with parameters $\alpha_1,\dots,\alpha_v$. Suppose $T*=\{t_1*,\dots,t_K*)\}$. Then $P(T* | T)=P\{(t_1*,\dots,t_K*)|T\}$, the proof follows by conditioning $t_i*$ successively on the previous $t*$'s and $T$, and taking product using the corresponding posteriors from Dirichlet prior corresponding to $\tau$, 
$\mathcal{D}(\alpha_1,\dots,\alpha_v)$.
\end{proof}

In the following, we will iteratively use the above theorem to $q_i$'s with different $T$'s and propose a greedy Bayesian algorithm 
to construct $A$. Specifically, we decompose $Q$ and disjoint sequences based on $q_i$s from which $Q$ can learn fastest to construct $A$. The construction is prescribed below.

\textbf{Algorithm for Decomposing $Q$:}
\begin{enumerate}
    \item Let ${q^*}_1= \argmax_i P(q_i|Q)$ with the answer ${a^*}_1$. Define $Q_1= Q \cap {q^*}_1$ and ${Q_1}^c =Q-Q_1$.
    \item stop if ${Q_1}^c =\Phi$.
    \item else, 
    Let ${q^*}_2= \argmax_i P(q_i|{Q_1}^c)$ with the answer ${a^*}_2$. Define $Q_2= {Q_1}^c \cap {q^*}_2)$ and ${Q_2}^c ={Q_1}^c -Q_2$
    \item stop if ${Q_2}^c =\Phi$.
    \item Continue till for some $k$, ${Q_k}^c ={Q_{k-1}}^c -Q_k = \Phi$ and then stop.
\end{enumerate}

\begin{theorem} 
If $Q_i$ be as per the construction above till stopping at some $k$, then
$Q=\bigcup_{i \leq k} Q_i$, and $Q_i$ are mutually exclusive. 
\end{theorem}

Clearly, the selection of $q_i$'s above will depend upon the prior distribution provided by $\tau$. Finally, having found this decomposition, it is easy to construct $A$, by using the correspondence between $t$'s and $s_t$'s as below. 

\textbf{Algorithm for Constructing $A$}
\begin{enumerate}
    \item Let $A_i = {a^*}_i \cap \{ s : s \text{ corresponding to } t \in Q_i \}$, for $i=1, \dots, k$
    \item $A = \bigcup_{i=1}^k A_i$
\end{enumerate}

\textbf{Comment 4:} Note that we can compute generative probability of any sequence of $s_t$ in $A$ given $Q$ and $\tau$ by using Theorem~\ref{thm: generative probability} due to the correspondence between $t$ and $s_t$. Further note that if $Q  \not\subset \mathcal{Q}$, then the In-Context learning will not work and generate incorrect answers.

In the next section, we experimentally validate our model of learning with a real world LLM, llama-3.

\subsection{Experiments: Visualizing the learning process}\label{sec:experiments} In this section, we visualize the process of learning by LLMs. To do so, we instrumented the publicly available \textit{llama.c} code along with the llama-3 model to depict the next token probabilities visually. Our visualizer\footnote{We will make our tool open source and publicly available} lets the user inspect the next token probability distribution of not only the completion, but also of the prompt itself by hovering over the tokens. We also colorize the tokens themselves based on their probability, with green being the highest probability and red being the lowest. We depict it in Figure~\ref{fig:nyt-article} by asking the model llama-3 to complete the following paragraph from a very recent story in the New York Times (and hence outside of the training set): \textit{``A sell-off in markets around the world turned into a rout on Monday as investors grew panicky about signs of a slowing American economy, with stocks tumbling across Asia and Europe.
The moves marked a sharp reversal in the world’s major markets, which for much of the past year have risen to new heights, ''}
As we can observe, the model generates a reasonable completion (\textit{``driven by optimism about the global economy and central banks’ efforts to stimulate growth.''} vs the actual which was \textit{``propelled by optimism about cooling inflation, solid labor markets and the promise of artificial intelligence technology.''}). The probabilities can also be seen to be in the orange to red levels, because it is a distribution over the entire language and the story can go in many different ways, except in cases where grammar or world model constraints make the LLM very confident of what the next token should be, e.g., the model is very confident that ``the world'' should follow ``around'' or ``reversal'' should follow ``sharp'' etc. We also show the top 2 probabilities for the next token after ``optimism about'' and the choices are ``global'' with probability .71 and ``US'' with probability .107 (with the model picking ``global'').
\begin{figure}
    \centering
    \includegraphics[width=0.9\linewidth]{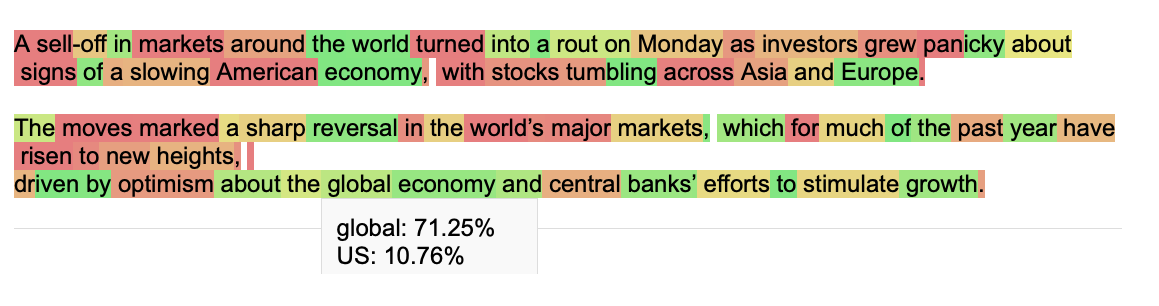}
    \caption{Visualizing the completion of a paragraph from a news story in the New York Times}
    \label{fig:nyt-article}
\end{figure}

We now depict the same visual in Figure~\ref{fig:visualizing few shots} for the in-context learning example of the cricket query DSL from Table~\ref{tab:queries_outputs}.  We show the natural language query, followed by the DSL, followed by another 3 such examples, finally followed by the natural language query ``highest losing team total
in Tournament0''. There are a few things to observe here: the natural language queries in all examples are again in the orange to red range, because they are a distribution over (close to) normal English language and hence the degrees of freedom for the continuing text are large. The DSL starts off in the orange-red for the first example, but by the time we reach the third example, the model has picked up the DSL and it has become confident about the mapping. The syntax of the DSL coupled with the semantics of the preceding natural language query restricts the possible options for the next token(s), and the final query is completed with high confidence (the only splotch of yellow seen is whether to pick a ``space'' or not after the comma in the json item).

\begin{figure}
    \centering
    \includegraphics[width=0.9\linewidth]{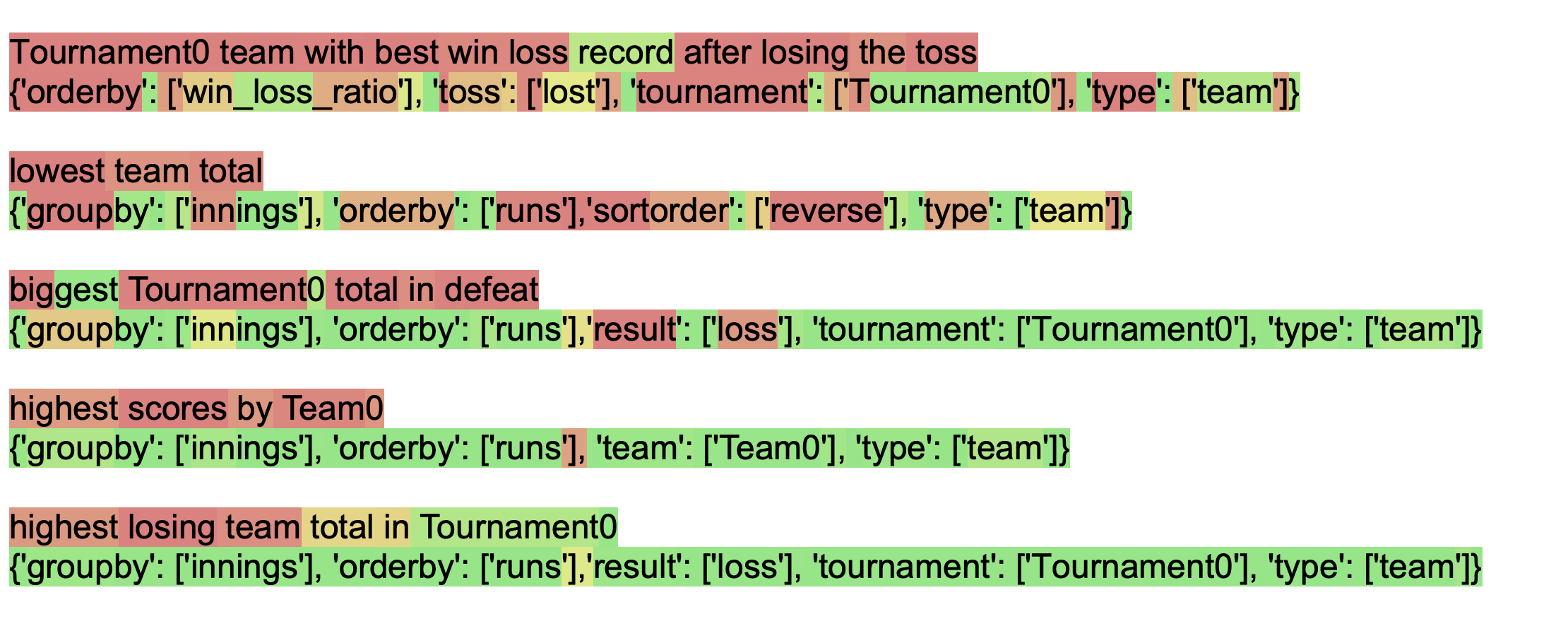}
    \caption{Visualizing in-context learning}
    \label{fig:visualizing few shots}
\end{figure}

We show the same visual for the second in-context-learning example in Table~\ref{tab:queries_outputs_extended} in Figure~\ref{fig:visualizing few shots 2} and can observe the same phenomenon. The completion by the model is confident, and the only choice that is made is in picking which order to complete the json items (overs\_type vs tournament). In this scenario we have more in-context examples and we can observe the LLM picking up on the syntax of the DSL as it goes along, with splotches of green in the earlier examples as well. 

\begin{figure}
    \centering
    \includegraphics[width=0.9\linewidth]{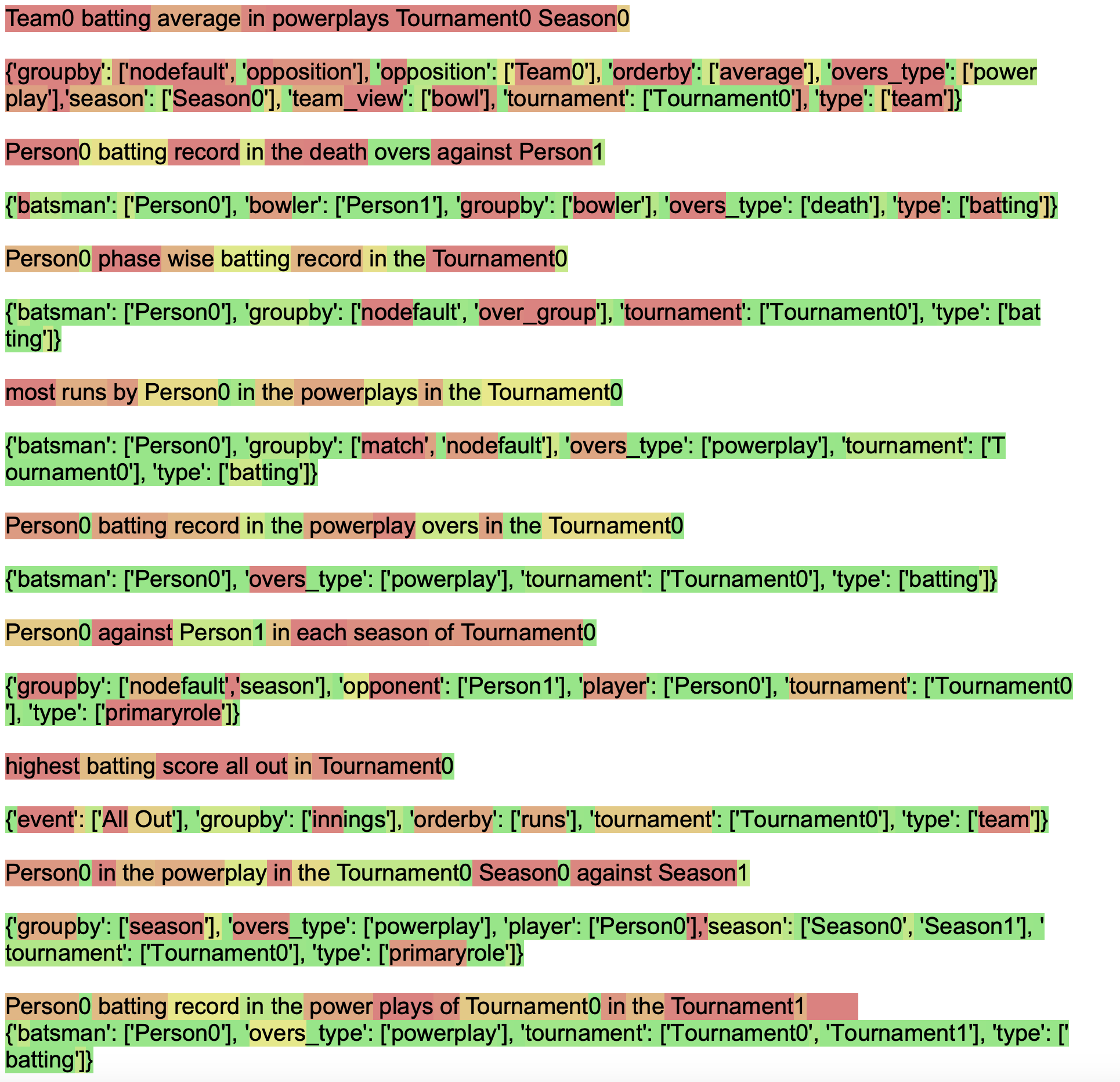}
    \caption{Visualizing in-context learning}
    \label{fig:visualizing few shots 2}
\end{figure}

Note that in Figure~\ref{fig:visualizing few shots} the completion of the query is in the third example (the new query is simply a rephrasing of the third query - ``biggest Tournament0 total in defeat'' vs ``highest losing team total in Tournament0''). If we replace the third example with something irrelevant, we can see the LLM still does a completion but it gets it wrong as the new query is not the subset of the in-context examples, which was a requirement of Assumption 1 (Also, see Comment 4). We can see that in Figure~\ref{fig:incorrect-examples} where now the third example is irrelevant and so the LLM generates ’toss’: [’lost’] instead of ’result’: [’loss’] which is the correct DSL since it has not seen an example of that in the previous $a_i$s. This points to both the importance of Assumption 1, as also the role that the pre-training plays - ’toss’: [’lost’] is the closest in english to the term "losing" that the LLM found in the examples and picked it and this kind of probabilistic generation is the cause of the well known issue of ``hallucinations'' in LLMs.

\begin{figure}
    \centering
    \includegraphics[width=0.9 \linewidth]{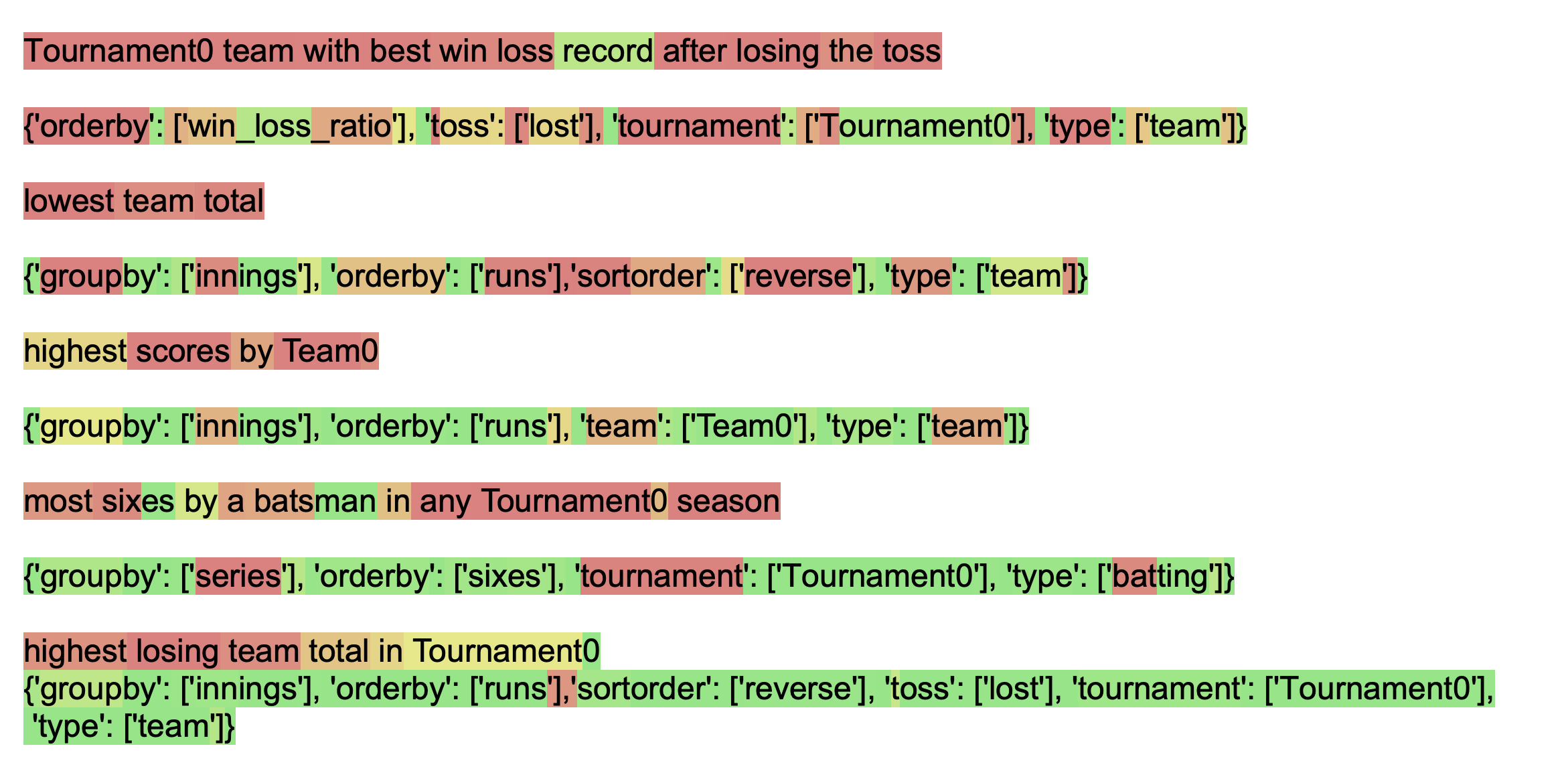}
    \caption{Incomplete in-context examples leading to incorrect completion}
    \label{fig:incorrect-examples}
\end{figure}

Finally, we show that this (Bayesian) learning happens not just for in-context learning with few shot examples, but for \textit{all} prompts and completions. This is important because even though we developed our model using examples of tasks in in-context learning, the LLM \textit{does not know} that the prompt is intended for in-context learning or is a generic completion, so our model should apply to \textit{all} prompts in full generality. We depict that by modifying the New York Times paragraph earlier used and seeding it with the name, randomly selected to be Ramesh K Sitaraman as Fed Chairman: \textit{``A sell-off in markets around the world turned into a rout on Monday as investors grew panicky about signs of a slowing American economy, with stocks tumbling across Asia and Europe and all eyes turned to Fed Chairman Ramesh K Sitaraman.
The moves marked a sharp reversal in the world’s major markets,   which for much of the past year have risen to new heights,   propelled by optimism,   about cooling inflation,   solid labor markets and the promise of artificial intelligence technology.   When reached for comment''}. Expectedly according to our model of Bayesian learning, the LLM picks it up and has a high probability of using that name when another reference to the Fed Chairman arises in the completion: \textbf{``Ramesh K Sitaraman declined to comment on the market volatility,  saying only that the Federal Reserve is closely monitoring the situation and will take necessary steps to ensure the stability of the financial system.''} with the name appearing with high confidence.

\begin{figure}
    \centering
    \includegraphics[width=0.9\linewidth]{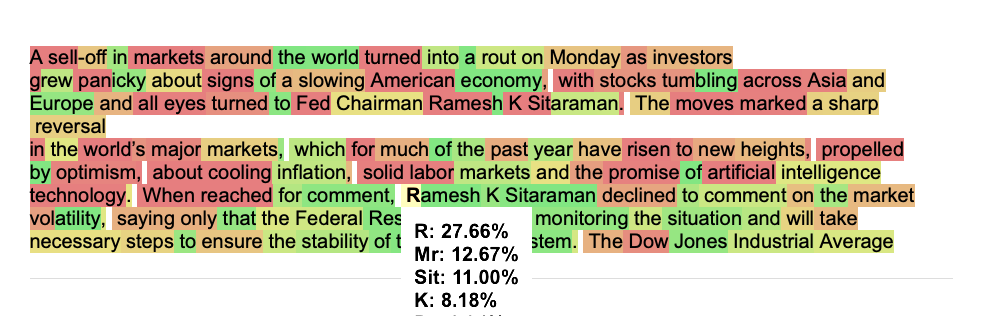}
    \caption{General learning in an LLM by seeding information in the prompt}
    \label{fig:general-learning}
\end{figure}
\section{Embeddings and General LLM Operation}\label{sec:embeddings-algebra}

In this section, we expand our discussion beyond in-context learning to explain the general operation of LLMs using the concept of embeddings. Figure~\ref{fig:query_vectors_and_embeddings} provides a conceptual representation of the embedding vectors that an LLM works with, encompassing both the embeddings of the prompt (introduced token by token, shown in dark blue and labeled with the sub-prompt) and the embeddings from the pre-trained model (shown in light red). The pre-trained embeddings can be assumed to correspond to the approximation of the matrix $\tau$ that the LLM learns during training.

\subsection{Embedding Space and Token Prediction}

The LLM's operation can be understood as a navigation through a high-dimensional embedding space. Each token or sequence of tokens is represented as a point in this space. The pre-trained model provides a vast landscape of embeddings, while the prompt introduces new points that the model must interpret and use for prediction.

When predicting the next token, the LLM essentially performs an optimization in this embedding space. Referring back to our algorithm for decomposing $Q$, we defined $q^*_1 = \arg\max_i P(q_i|Q)$. In the context of embeddings, this $\arg\max$ operation can be interpreted as finding the embedding closest to $Q$, represented as a linear combination of pre-trained embeddings and embeddings of prompt substrings.

\subsection{Distribution Generation}

Leveraging Theorem \ref{thm:continuity}, the LLM generates the next token distribution from this linear combination of embeddings. This process can be broken down into two components:

\begin{enumerate}
    \item Pre-trained embeddings: These contribute a standard multinomial distribution with probabilities $\{p_1, \ldots, p_m\}$, reflecting the model's prior knowledge.
    \item Prompt substring embeddings: These introduce a more deterministic element. For each substring of the prompt, the distribution would have $p_i = 1$ for the exact next token in that substring, and zero elsewhere.
\end{enumerate}

\subsection{In-Context Learning vs. General Completion}

The balance between these components differs for in-context learning scenarios and general text completion:

\subsubsection{In-Context Learning}
When the prompt contains examples that are unlikely to be in the pre-training set, the $\arg\max$ operation tends to be dominated by substrings of the prompt that contain the relevant query (as per Assumption 1). 
\begin{figure}[htbp]
\centering
\begin{tikzpicture}[scale=1.2]
\def\textAFull{Tournament0 team with best win loss record after losing the toss}
\def\textAShort{Tournament0 team ... toss}
\def\textB{{'orderby': ... ['team']}}
\def\textC{lowest ... total}
\def\textD{{'groupby': ... ['team']}}
\def\textE{biggest ... defeat}
\def\textF{{'groupby': ... ['team']}}
\def\textG{highest ... Team0}
\def\textH{{'groupby': ... ['team']}}

\foreach \offset/\angle/\label [count=\i] in {
    0/0/\textAFull,
    0.3/-2/\textAShort\textB,
    0.6/-4/\textAShort\textB\textC,
    0.9/-6/\textAShort\textB\textC\textD,
    1.2/-8/\textAShort\textB\textC\textD\textE,
    1.5/-10/\textAShort\textB\textC\textD\textE\textF,
    1.8/-12/\textAShort\textB\textC\textD\textE\textF\textG,
    2.1/-14/\textAShort\textB\textC\textD\textE\textF\textG\textH
} {
    \draw[->, thick, blue] (\offset, {-0.7*\i}) -- ++(\angle:5);
    \path[postaction={decorate,decoration={text along path,text align=center,text={|\tiny|\label}, raise=3pt}}]
        ({\offset+0.2*cos(\angle)}, {-0.7*\i+0.2*sin(\angle)}) -- ++({\angle}:4.6);
    \node[circle, fill=black, inner sep=1pt] at ({\offset + 5*cos(\angle)}, {-0.7*\i + 5*sin(\angle)}) {};
    \node[font=\scriptsize] at ({\offset + 5.2*cos(\angle)}, {-0.7*\i + 5.2*sin(\angle)}) {\i};
}

\foreach \x/\y/\angle [count=\i] in {
    -1.5/-2/30, -1/-2.5/45, -0.5/-3/60, 0/-3.5/40, 0.5/-4/50, 1/-4.5/35,
    1.5/-5/55, -1.2/-5.5/25, -0.8/-6/65, 0.2/-6.5/70, 0.8/-7/20, 1.2/-7.5/75,
    2.5/-2/150, 3/-2.5/135, 3.5/-3/120, 4/-3.5/140, 3.5/-4/130, 3/-4.5/145,
    2.5/-5/125, 3.2/-5.5/155, 3.8/-6/115, 2.8/-6.5/110, 3.2/-7/160, 3.8/-7.5/105
} {
    \draw[->, thick, red!30, opacity=0.3] (\x,\y) -- ++(\angle:3);
    \path[red!30, opacity=0.3, postaction={decorate,decoration={text along path,text align=center,text={|\tiny|pre-trained \i}, raise=2pt}}]
        ({\x+0.2*cos(\angle)}, {\y+0.2*sin(\angle)}) -- ++(\angle:2.6);
}
\end{tikzpicture}
\caption{Vector representation of progressive query substrings and pre-trained embeddings}
\label{fig:query_vectors_and_embeddings}
\end{figure}
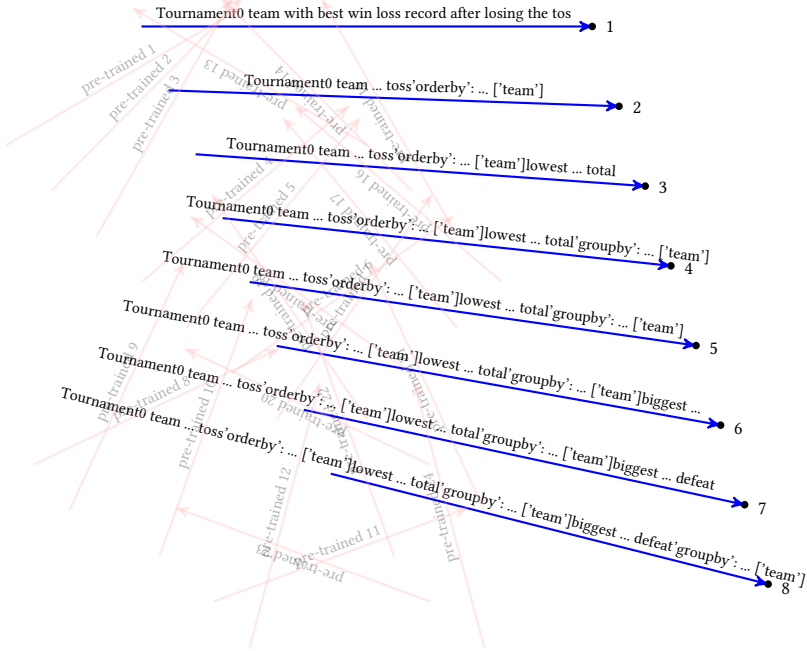
For instance, consider the example from Table \ref{tab:queries_outputs}.
In this case, for the query $Q$: ``highest losing team total in Tournament0'', the closest embedding vector would likely be derived from the substring ending with ``biggest Tournament0 total in defeat'' (it is simply a rephrasing of $Q$ and so in a well designed embedding space the two  should be close) and its prefix ($q_3$+prefix). The LLM constructs a distribution by combining the embedding of ($q_3$+prefix) with closest embeddings to the residual between the full prompt and ($q_3$+prefix). This distribution is heavily influenced by the empirical next token distribution of ($q_3$+prefix), which corresponds to $a_3$.

\subsubsection{General Completion}
For general text completion, the linear combination of embeddings is typically dominated by the pre-trained set. Our New York Times example demonstrated this behavior. However, introducing specific, novel information (like "Fed Chairman Ramesh K Sitaraman") can significantly alter the distribution when related content needs to be generated, leading to the repetition of this new information.

\subsection{Token-by-Token Generation}

The process described above occurs iteratively, token by token, as the LLM generates its completion. At each step, the model:
\begin{enumerate}
    \item Updates the prompt embedding with the newly generated token.
    \item Recalculates the closest embedding combination.
    \item Generates a new distribution for the next token.
    \item Samples from this distribution and generates a new token
\end{enumerate}

\subsection{Implications for LLM Behavior}

This mechanism of continually updating and combining embeddings explains several key behaviors of modern LLMs:

\begin{enumerate}
    \item Adaptability: LLMs can quickly adapt to new information provided in the prompt, even outside of explicit in-context learning scenarios.
    \item Coherence: By maintaining a consistent embedding context, LLMs can generate text that remains coherent over long passages.
    \item Contextual Understanding: The model's ability to combine embeddings from different parts of the prompt allows it to capture and utilize complex contextual relationships.
    \item Human-like Quality: The dynamic nature of this process, where each new token can potentially shift the embedding landscape, contributes to the human-like quality of LLM-generated text. It allows for subtle changes in direction and tone that mimic human thought processes.
    \item Limitations: This mechanism also identifies several key limitations of LLMs:
    \begin{itemize}
        \item Prompt Sensitivity: The model's heavy reliance on prompt embeddings can lead to high sensitivity to prompt wording and order, potentially resulting in inconsistent outputs for semantically similar queries.\footnote{this phenomenon has resulted in, at the time of writing of this paper, the emergence of a skill called ``prompt engineering''}
        \item Context Window Constraints: As the prompt grows longer, earlier parts may be ``forgotten'' or have diminished influence, as their embeddings become less relevant in the linear combination. This explains the challenge LLMs face with truly long-term dependencies.
        \item Hallucination: When the model encounters prompts that map to unfamiliar regions of the embedding space, it may generate plausible-sounding but factually incorrect information by combining embeddings in novel, potentially erroneous ways.
        \item Bias Amplification: Pre-existing biases in the training data can be amplified through the embedding combination process, potentially leading to skewed or unfair outputs, especially for underrepresented concepts or groups.
        \item Lack of Causal Understanding: While embeddings capture semantic relationships, they don't inherently encode causal structures. This can lead to LLMs making logically incorrect inferences or failing to understand cause-and-effect relationships accurately.
        \item Computational Scalability: As the embedding space grows with model size and context length, the computational complexity of finding optimal embedding combinations increases, potentially limiting the scalability of this approach for even larger language models.
    \end{itemize}
\end{enumerate}

 Viewing LLM operation through the lens of embedding combinations provides a powerful framework for understanding both their capabilities and limitations. It offers insights into how these models achieve their impressive performance across a wide range of tasks, from specific in-context learning to open-ended text generation.

\section{Implications of our model}\label{sec:implications}
In this section we present some implications of our model, beyond the conclusions already drawn:
\subsection{The importance of embeddings}
We show that the performance of Bayesian learning in LLMs depends critically on the performance of the embeddings. Specifically we proved a ``lipschitz-like" continuity property based on the assumption of convexity preserving mapping of embeddings to next token multinomial distributions, and a general result on the importance of continuity has been established in \cite{dolera2023lipschitz}. Embeddings are typically learnt as part of the LLM training process, called context-dependent ~\cite{peters-etal-2018-deep} but can also be independent of it. Our property implies that the embeddings of say words like ``love" and ``glove" be (sufficiently) far apart so that the semantic meaning is preserved in the mapping to the next token prediction probability distribution. This can be learnt by training purely on language. Sometimes, language models also train on world knowledge, and this implies the embeddings of say ``Robert F Kennedy" and ``Robert F Kennedy Jr." be far apart. However, mixing a world model with the language model can lead to unpredictable results and this needs to be carefully thought through. It may be optimal to train LLMs and embeddings on only language and logic and introduce world models or knowledge primarily via the prompt and let the Bayesian posterior incorporate that knowledge into the generated text (a technique commonly known as retrieval augmented generation or RAG). However this needs to be explored and is part of our future work.
\subsection{Chain of thought reasoning} Recently, Chain of Thought (CoT) reasoning~\cite{wei2023chainofthought,narang2020wt5, lampinen2022can} has been shown to be an effective way to increase the accuracy of answers from LLMs. This seems a natural consequence of the fact that if the LLMs break a problem into simpler steps, they are likely to have been trained on those simpler steps in some other context, and once the simpler step is generated for the current prompt, the LLM fits the embeddings closest to the steps it was trained earlier, and generates the corresponding multinomial distribution by the Bayesian learning process, as mentioned in Section~\ref{sec:icl-bayesian}. Without a step by step breakdown, it is possible that the LLM has not been trained (sufficiently) on similar inputs and hence the multinomial probability generated may not be accurate and hence Chain of Thought generally outperforms the vanilla prompt.

\subsection{Deep Learning Architecture} In our work, we have treated the specific deep learning architecture as a black box that learns to efficiently encode the next token multinomial probabilities associated with embeddings found in the training corpus. The architecture that has dominated the LLM world in the past few years has been the Transformer, however recently structured state space models (SSMs) based models like Mamba ~\cite{gu2023mamba} have shown a lot of promise in addressing the computational inefficiencies of the Transformer model. Which architecture is optimal in terms of parameter efficiency or computational efficiency remains an intriguing open problem. From our standpoint, the critical feature of the LLMs is the predict the next token optimization metric coupled with the cross entropy loss during training which remains common across the various neural network architectures.

\subsection{Improving the ``hallucinations'' issue} One recurring issue with LLMs is that of hallucinations, where LLMs appear to make up stuff. Given the intended application of LLMs, creative text generation or serving facts, this can be either a feature or a bug. By viewing the LLM as essentially a map between the prompt embedding and the next token multinomial probability as described in our paper, we can reason about the ``confidence'' the LLM has in a generated answer. In particular, we can look at the entropy of the associated multinomial distribution of the picked token and make claims about hallucinations. In general, a lower entropy indicates a more peaked distribution and higher confidence in the answers. In the appendix we present a result, Theorem~\ref{thm:entropy}, that can serve as a guide to pick the next token to reduce entropy and increase confidence. A full treatment of the topic is beyond the scope of the current submission but gives a flavor of the kind of analysis possible with our framework. Note that the analysis is valid for any LLM that produces next token predictions, and is not dependent on any of our assumptions of Bayesian learning.
\section{Conclusions}\label{sec:conclusions} In this paper we present a new model to explain the behavior of Large Language Models. Our frame of reference is an abstract probability matrix, which contains the multinomial probabilities for next token prediction in each row, where the row represents a specific prompt. We then demonstrate that LLM text generation is consistent with a compact representation of this abstract matrix through a combination of embeddings and Bayesian learning. Our model not only explains text generation by LLMs in generality, including in-context learning, but also explains the emergence of ICL with scale of the LLMs, as also other phenomena like Chain of Thought reasoning and the problem with large context windows. Using an instrumented version of an open source LLM, we experimentally demonstrate the behavior of LLMs consistent with our model. We show the equivalence of the mathematical process of Bayesian learning to vector algebra in the embedding space, which explains the workings of LLMs. Finally, we outline implications of our model and some directions for future exploration.

\bibliography{references}

\begin{thebibliography}{10}

\bibitem{bahdanau2015neural}
Dzmitry Bahdanau, Kyunghyun Cho, and Yoshua Bengio.
\newblock Neural machine translation by jointly learning to align and translate.
\newblock In {\em The International Conference on Learning Representations (ICLR)}, 2015.

\bibitem{brown2020language}
Tom~B Brown, Benjamin Mann, Nick Ryder, Melanie Subbiah, Jared Kaplan, Prafulla Dhariwal, Arvind Neelakantan, Pranav Shyam, Girish Sastry, Amanda Askell, et~al.
\newblock Language models are few-shot learners.
\newblock In {\em Advances in neural information processing systems}, volume~33, pages 1877--1901, 2020.

\bibitem{dalal1980approxMDP}
S.~R. Dalal and G.~J. Hall.
\newblock On approximating parametric bayes models by nonparametric bayes models.
\newblock {\em Annals of Statistics}, 8(3):664--672, 1980.

\bibitem{dalal1983approximating}
S.~R. Dalal and W.~J. Hall.
\newblock Approximating priors by mixtures of natural conjugate priors.
\newblock {\em Journal of the Royal Statistical Society. Series B (Methodological)}, 45(1):90--97, January 1983.

\bibitem{dolera2023lipschitz}
Emanuele Dolera and Edoardo Mainini.
\newblock Lipschitz continuity of probability kernels in the optimal transport framework, 2023.

\bibitem{10.1093/biomet/asz077}
E~Fong and C~C Holmes.
\newblock {On the marginal likelihood and cross-validation}.
\newblock {\em Biometrika}, 107(2):489--496, 01 2020.

\bibitem{gu2023mamba}
Albert Gu and Tri Dao.
\newblock Mamba: Linear-time sequence modeling with selective state spaces, 2023.

\bibitem{Kawaguchi_2022}
K.~Kawaguchi, Y.~Bengio, and L.~Kaelbling.
\newblock Generalization in deep learning.
\newblock In {\em Mathematical Aspects of Deep Learning}, pages 112--148. Cambridge University Press, dec 2022.

\bibitem{lampinen2022can}
Andrew~K. Lampinen, Ishita Dasgupta, Stephanie~C.Y. Chan, Kory Matthewson, Michael~Henry Tessler, Antonia Creswell, James~L. McClelland, Jane~X. Wang, and Felix Hill.
\newblock Can language models learn from explanations in context?
\newblock {\em arXiv preprint arXiv:2204.02329}, 2022.

\bibitem{li2023symbolic}
Liunian~Harold Li, Jack Hessel, Youngjae Yu, Xiang Ren, Kai-Wei Chang, and Yejin Choi.
\newblock Symbolic chain-of-thought distillation: Small models can also "think" step-by-step.
\newblock {\em arXiv preprint arXiv:2306.14050}, 2023.

\bibitem{Lightman2023LetsVerify}
Hunter Lightman, Vineet Kosaraju, Yura Burda, Harri Edwards, Bowen Baker, Teddy Lee, Jan Leike, John Schulman, Ilya Sutskever, and Karl Cobbe.
\newblock Let's verify step by step.
\newblock {\em arXiv preprint arXiv:2305.20050}, 2023.

\bibitem{liu2022transformers}
Bingbin Liu, Jordan~T Ash, Surbhi Goel, Akshay Krishnamurthy, and Cyril Zhang.
\newblock Transformers learn shortcuts to automata.
\newblock {\em arXiv preprint arXiv:2210.10749}, 2022.

\bibitem{marshall1979theory}
Albert~W. Marshall and Ingram Olkin.
\newblock {\em Theory of Majorization and Its Applications}.
\newblock Academic Press, 1979.

\bibitem{10.5555/3666122.3669615}
Pablo Moreno-Mu\~{n}oz, Pol~G. Recasens, and S\o{}ren Hauberg.
\newblock On masked pre-training and the marginal likelihood.
\newblock In {\em Proceedings of the 37th International Conference on Neural Information Processing Systems}, NIPS '23, Red Hook, NY, USA, 2024. Curran Associates Inc.

\bibitem{narang2020wt5}
Sharan Narang, Colin Raffel, Katherine Lee, Adam Roberts, Noah Fiedel, and Karishma Malkan.
\newblock {WT5?! Training text-to-text models to explain their predictions}.
\newblock {\em arXiv preprint arXiv:2004.14546}, 2020.

\bibitem{openai2023chatgpt}
{OpenAI}.
\newblock Chatgpt: Optimizing language models for dialogue.
\newblock \url{https://www.openai.com/chatgpt}, 2023.
\newblock Accessed: 2023-11-08.

\bibitem{peters-etal-2018-deep}
Matthew~E. Peters, Mark Neumann, Mohit Iyyer, Matt Gardner, Christopher Clark, Kenton Lee, and Luke Zettlemoyer.
\newblock Deep contextualized word representations.
\newblock In Marilyn Walker, Heng Ji, and Amanda Stent, editors, {\em Proceedings of the 2018 Conference of the North {A}merican Chapter of the Association for Computational Linguistics: Human Language Technologies, Volume 1 (Long Papers)}, pages 2227--2237, New Orleans, Louisiana, June 2018. Association for Computational Linguistics.

\bibitem{NIPS2017_3f5ee243}
Ashish Vaswani, Noam Shazeer, Niki Parmar, Jakob Uszkoreit, Llion Jones, Aidan~N Gomez, \L~ukasz Kaiser, and Illia Polosukhin.
\newblock Attention is all you need.
\newblock In I.~Guyon, U.~Von Luxburg, S.~Bengio, H.~Wallach, R.~Fergus, S.~Vishwanathan, and R.~Garnett, editors, {\em Advances in Neural Information Processing Systems}, volume~30. Curran Associates, Inc., 2017.

\bibitem{wei2023chainofthought}
Jason Wei, Xuezhi Wang, Dale Schuurmans, Maarten Bosma, Brian Ichter, Fei Xia, Ed~Chi, Quoc Le, and Denny Zhou.
\newblock Chain-of-thought prompting elicits reasoning in large language models, 2023.

\bibitem{wei2023larger}
Jerry Wei, Jason Wei, Yi~Tay, Dustin Tran, Albert Webson, Yifeng Lu, Xinyun Chen, Hanxiao Liu, Da~Huang, Denny Zhou, and Tengyu Ma.
\newblock Larger language models do in-context learning differently, 2023.

\bibitem{DBLP:journals/corr/abs-2111-02080}
Sang~Michael Xie, Aditi Raghunathan, Percy Liang, and Tengyu Ma.
\newblock An explanation of in-context learning as implicit bayesian inference.
\newblock {\em CoRR}, abs/2111.02080, 2021.

\end{thebibliography}
\bibliographystyle{plain}
\section{Appendix}
\subsection{Dirichlet approximation}
We now show that any prior over multinomial distributions can be approximated as a finite mixture of Dirichlet distributions and prove Theorem~\ref{thm:dirichlet}. To refresh, the statement of the theorem is:

\renewcommand{\thetheorem}{2}
\begin{theorem}[{Dirichlet approximation}] Any distribution over $multinomial$ probabilities \\ $u(p_{1},p_{2},\cdots p_{m})$ which has a continuous bounded density function can be approximated as a finite mixture of Dirichlet distributions. 
\end{theorem}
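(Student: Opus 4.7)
The plan is to use the classical multivariate Bernstein--Weierstrass approximation theorem on the probability simplex $\Delta^{m-1} = \{(p_1,\ldots,p_m) : p_i \ge 0,\ \sum_i p_i = 1\}$, combined with the elementary observation that every Bernstein basis polynomial on the simplex coincides, up to a combinatorial constant, with the density of a Dirichlet distribution with integer concentration parameters.

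First I would write down, for each positive integer $n$, the Bernstein polynomial
\[
B_n(p) \;=\; \sum_{|k|=n} u(k/n)\,\binom{n}{k}\,\prod_{i=1}^{m} p_i^{k_i},
\]
where the sum ranges over multi-indices $k=(k_1,\ldots,k_m)$ of non-negative integers with $k_1+\cdots+k_m = n$ and $\binom{n}{k} = n!/(k_1!\cdots k_m!)$. Since $u$ is continuous and bounded on the compact simplex, the multivariate Bernstein--Weierstrass theorem gives $B_n \to u$ uniformly on $\Delta^{m-1}$ as $n\to\infty$.

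Next I would identify each basis monomial with a Dirichlet density: using $\Gamma(k_i+1) = k_i!$ one obtains
\[
\binom{n}{k}\prod_{i=1}^{m} p_i^{k_i} \;=\; \frac{n!}{(n+m-1)!}\;\mathrm{Dir}\bigl(p \,\big|\, k_1+1,\ldots,k_m+1\bigr),
\]
so $B_n$ is a non-negative linear combination of Dirichlet densities (with all parameters integer and at least $1$, hence no boundary pathology). To obtain a bona fide probability mixture I would normalize by $c_n := \tfrac{n!}{(n+m-1)!}\sum_{|k|=n} u(k/n)$ and set $\tilde B_n = B_n/c_n$. The quantity $c_n$ is a Riemann-type sum for $\int_{\Delta^{m-1}} u(p)\,dp = 1$ on the lattice $\{k/n : |k|=n\}$ and converges to $1$ by continuity of $u$. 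Combined with the uniform convergence $B_n \to u$, this produces uniform (and hence $L^1$) convergence of the finite Dirichlet mixture $\tilde B_n$ to $u$.

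The main obstacle I anticipate is reconciling the Riemann-sum normalization $c_n$ with the uniform Bernstein convergence at a matched rate; the cleanest workaround is to state the conclusion in the $L^1$ / total-variation sense, where $c_n \to 1$ together with uniform $B_n \to u$ immediately delivers convergence of the normalized Dirichlet mixture to $u$ as densities. A secondary but easily dispatched subtlety is the behavior on the boundary of the simplex, but since every appearing Dirichlet parameter is an integer at least one, each mixture component is continuous on the closed simplex and the approximation extends to the boundary without difficulty.
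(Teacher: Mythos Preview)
Your proposal is correct and is essentially the same argument as the paper's: the paper writes the Bernstein polynomial probabilistically as $E\bigl(u(\hat P_n)\bigr)$ for the empirical proportions $\hat P_n$ from $n$ multinomial draws and obtains uniform convergence via the law of large numbers, then performs the identical identification of the multinomial kernel with $\tfrac{\Gamma(n+1)}{\Gamma(n+m)}\,\mathcal{D}(p\mid x_1+1,\ldots,x_m+1)$ and the same normalization using $\int u=1$. The only cosmetic difference is that you cite the multivariate Bernstein--Weierstrass theorem by name whereas the paper reproduces its standard probabilistic proof; your handling of the normalizing constant $c_n$ (noting $c_n=\int B_n\to\int u=1$) is in fact a bit cleaner than the paper's.
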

\begin{proof} 
Consider a multinomial distribution on probabilities $P=(p_1,p_2,\cdots ,p_m), \sum p_i =1.$ Now consider a fictitious experiment of generating $n$ observations from this multinomial distribution resulting in $x_i$ observations in $i^{th}$ category, $i=1,2,\cdot m$. Note that $x_i \geq 0, i=1,\cdots,m$ and $\sum_i x_i =n$. Let $(\hat{P_n})=(x_1, x_2,\cdots ,x_m)/n)$, be the corresponding empirical probabilities. 

Then by the strong law of large numbers, $\hat{P_n} \rightarrow P \, a.s.$ Then, for any bounded continuous function, $$E(u((\hat{P_n})) \rightarrow u(P) \, \mbox{uniformly in} \, P. \, \mbox{where },$$ 
\begin{align*}
E(u(\hat{P_n})) &= \sum_{x_1,\dots ,x_m} u\left(\frac{x_1}{n},\frac{x_2}{n}, \dots, \frac{x_m}{n}\right) \\
&\quad \times \frac{\Gamma(n+1)}{\Gamma(x_1+1)\cdots\Gamma(x_m+1)} \prod_{i} p_i^{x_i}
\end{align*}

Now let ${\mathcal{D}}(p|\alpha_1,\alpha_2,\dots ,\alpha_m)$ be a the density of an $m$ Dirichlet distribution with parameters $\alpha_1,\alpha_2,\dots ,\alpha_m$. Then, the above can be simplified to show that:
\begin{align*}
E(u(\hat{P_n})) &= \sum_{x_1,\dots ,x_m} u\left(\frac{x_1}{n},\frac{x_2}{n}, \dots, \frac{x_m}{n}\right) \\
&\quad \times \frac{\Gamma(n+1)}{\Gamma(n+m)} \mathcal{D}(p|x_1+1,x_2+1,\dots ,x_m+1) \\
&\quad \rightarrow u(p)
\end{align*}

However, since $\int u(p) dp =1$, the integral of the middle term above  also tends to $1$. Using this fact and normalizing it gives us:
\begin{align*}
\sum_{x_1,\dots ,x_m} &u^{*}\left(\frac{x_1}{n},\frac{x_2}{n}, \dots, \frac{x_m}{n}\right) \\
&\times \mathcal{D}(p|x_1+1,x_2+1,\dots ,x_m+1) \rightarrow u(p)
\end{align*}
\text{where, }
\begin{align*}
u^{*}\left(\frac{x_1}{n},\frac{x_2}{n}, \dots, \frac{x_m}{n}\right) &= \frac{u\left(\frac{x_1}{n},\frac{x_2}{n}, \dots, \frac{x_m}{n}\right)}{\sum_{x_1,\dots ,x_m} u\left(\frac{x_1}{n},\frac{x_2}{n}, \dots, \frac{x_m}{n}\right)}
\end{align*}

\end{proof}

The theorem is a multi-dimensional generalization of \cite{dalal1983approximating}. It can be shown that the convergence is in $L_1$ as well as in Total Variation in $p$ and the posteriors also converge. A special case of this is $Binomial$ distribution with $Beta$ prior, whereby any arbitrary prior for Binomial distribution can be approximated by a mixture of Beta distributions. Similar, but weaker results apply for approximating a random probability measure by the mixtures of Dirichlet Processes~\cite{dalal1980approxMDP}.

\textbf{Comment 5}
Estimating arbitrary $u(p_{1},p_{2},\cdots p_{m})$ is difficult. The above theorem allows us to approximate each such distribution by a mixture of known Dirichlet distributions, with unknown mixing constants. These can be then determined by EM type algorithm.

Theorem~\ref{thm:dirichlet} can lead to an efficient design of LLMs by identifying a small "basis set" by which any arbitrary $Multinomial$ distribution can be generated. It can help identify the right training set for a particular task that enables the creation of this basis set. Current practice of training LLMs is to use a mildly curated version of ``the Internet" (wikipedia, reddit posts etc.) and a rigorous way to create the training set is needed.

\subsection{Entropy}
The following theorem shows that as our prompts strengthens getting to a right answer, then the entropy and cross-entropy decreases. This result uses concepts of majorization and Schur-concavity~\cite{marshall1979theory}. 

\subsection{Majorization and Schur-functions}
We say $q$ majorizes $p$, i.e.,  $q \succ p$, if $q_{[1]} \geq p_{[1]}$, $q_{[1]}+q_{[2]} \geq p_{[1]}+p_{[2]}$,  $q_{[1]}+...+q_{[i]} \geq p_{[1]}+...+p_{[i]}$ and $q_{[1]}+...+q_{[m]} = p_{[1]}+...+p_{[m]}$, where $p_{[i]}$ is the $i^{th}$ largest value of $(p_1,...,p_m)$. In what follows, $p \, and \, q$ are $multinomial$ probability vectors. 

Note that if $q \succ p$ then $q$ is more peaked (concentrated) than $p$.

Definition 1 ({\it{Schur-convexity/concavity}}): A function $f$ is Schur-concave (Schur-convex) if it is permutation-invariant and $q \succ p$, then $f(q) \leq f(p)$ ($f(q) \geq f(p)$). 

Definition 2 ({\it{T-transforms}}): Given $p$, $q=T_\epsilon (p)$ if $q=p$, except for some $i<j$ and some $\epsilon >0$
$q_{[i]}=p_{[i]}+\epsilon, \, and \, q_{[j]}=p_{[j]} - \epsilon$ 

we note the following results from \cite{marshall1979theory}:
\renewcommand{\thetheorem}{5}

\begin{theorem}
i) If $q=T_\epsilon (p)$, then $q \succ p$; 
ii) $q \succ p$ if only if there exist a series of \it{T-transforms} that transform $p$ in $q$
\end{theorem}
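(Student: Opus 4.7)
The plan is to establish the two claims independently: part (i) by a direct check of the partial-sum inequalities defining majorization, and part (ii) by noting that one direction follows from (i) with transitivity while the other requires an explicit inductive construction.

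For part (i), I would fix $q = T_\epsilon(p)$ with $\epsilon > 0$ and sorted positions $i < j$ where the two vectors differ, and check each defining inequality of $\succ$ directly. Writing the cumulative sums $S_\ell(v) = \sum_{k=1}^\ell v_{[k]}$, the transform gives $S_\ell(q) = S_\ell(p)$ for $\ell < i$, $S_\ell(q) = S_\ell(p) + \epsilon$ for $i \le \ell < j$, and $S_\ell(q) = S_\ell(p)$ for $\ell \ge j$. The total sum is preserved because the two changes cancel. Hence the required inequalities $S_\ell(q) \ge S_\ell(p)$ all hold with equality at $\ell = m$, which is exactly $q \succ p$.

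For part (ii), the "if" direction is immediate from part (i) together with the transitivity of $\succ$: a sequence $p = p_0, p_1, \dots, p_k = q$ with $p_{s+1} = T_{\epsilon_s}(p_s)$ yields $p_0 \prec p_1 \prec \cdots \prec p_k$ and therefore $p \prec q$. For the "only if" direction, I would give an algorithmic construction by induction on the number of sorted coordinates at which $p$ and $q$ disagree. Assume $q \succ p$ and $p \ne q$, both arranged in non-increasing order. Let $i$ be the smallest index with $p_{[i]} \ne q_{[i]}$; the equality of partial sums at $\ell = i-1$ combined with the majorization inequality at $\ell = i$ forces $q_{[i]} > p_{[i]}$. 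Since the total sums agree, some $j > i$ must satisfy $q_{[j]} < p_{[j]}$; let $j$ be the smallest such index. Choose $\epsilon = \min\{\,q_{[i]} - p_{[i]},\; p_{[j]} - q_{[j]}\,\}$ and apply $p' = T_\epsilon(p)$ at these sorted positions. By construction $p'$ agrees with $q$ in at least one more coordinate than $p$ did, and the choice of $\epsilon$ keeps $p'$ in non-increasing order.

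The main obstacle will be the bookkeeping in the "only if" step, specifically verifying that $q \succ p'$ after each T-transform so the induction hypothesis can be reapplied, and that the process terminates. The first reduces to checking that shifting $\epsilon$ of mass up in $p$ can only decrease the gap $S_\ell(q) - S_\ell(p)$ within the range $i \le \ell < j$, where the bounds $\epsilon \le q_{[i]} - p_{[i]}$ and $\epsilon \le p_{[j]} - q_{[j]}$ keep this gap non-negative; outside that range the gap is unchanged. Termination then follows because each step either strictly advances the index of first disagreement or strictly reduces the number of disagreements, so after finitely many applications $p$ is transformed into $q$.
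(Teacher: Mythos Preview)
The paper does not actually prove this theorem: it is stated as a known result and attributed to Marshall and Olkin's monograph on majorization. So there is no in-paper argument to compare against; your proposal is effectively a reconstruction of the classical proof that the cited reference contains.

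Your outline is indeed the standard Hardy--Littlewood--P\'olya construction, and part (i) together with the ``if'' half of (ii) are fine as written. In the ``only if'' half there is one small gap worth flagging. With your particular choice of indices (smallest $i$ with $p_{[i]}\ne q_{[i]}$, then smallest $j>i$ with $q_{[j]}<p_{[j]}$) and $\epsilon=\min\{q_{[i]}-p_{[i]},\,p_{[j]}-q_{[j]}\}$, the claim that $p'$ stays in non-increasing order can fail at positions $j,j{+}1$: take $p=(5,4,3,2)$ and $q=(10,2,1,1)$, which gives $i=1$, $j=2$, $\epsilon=2$ and $p'=(7,2,3,2)$. Once you re-sort $p'$ the number of sorted-coordinate disagreements need not drop, so your termination measure is not quite the right one. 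The usual remedy in the cited reference is to pick the pair so that the problematic boundary is controlled (for example, take $j$ to be the \emph{largest} index with $p_{[j]}>q_{[j]}$, which forces $p_{[j]}-\epsilon\ge q_{[j]}\ge q_{[j+1]}\ge p_{[j+1]}$), after which your inductive bookkeeping goes through exactly as you describe.
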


\subsection{Entropy and Cross-Entropy}
Given two $multinomial$ probability vectors $p \, \mbox{and}\, q$, recall that the cross-entropy of $q$ w.r.t. $p$ is defined as
$$CE(p,q)=- \sum_i p_i \log q_i=-\log \prod_i q_i^{p_i}$$
Note that $H(p)=CE(p,p)$ is the entropy of $p$ and $CE(p,q) \ge H(p)$\\

\renewcommand{\thetheorem}{6}

Now we state the main result:
\begin{theorem}[Entropy and Cross-Entropy Reduction] \label{thm:entropy}
i) Entropy, $H$ is a Schur-concave function of $p$\\
ii) If $p \succ q$, $CE(p,q)$, cross-entropy of $q$ wrt $p$, is a Schur-concave function of $p$\\ 
iii) If $q \succ p$, $CE(p,q)$, the cross-entropy of $q$ wrt $p$, is a Schur-convex function of $q$\\
\end{theorem}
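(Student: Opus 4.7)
The plan is to reduce each of the three claims to a single-step $T$-transform calculation, using the characterization cited just above (that $q \succ p$ iff $q$ is obtained from $p$ by a finite chain of $T$-transforms). Throughout I adopt the convention that the coordinates of $p$ and $q$ are listed in the same, decreasing, sort order; this is the natural convention for cross-entropy via the rearrangement inequality and leaves $CE(p,q)$ invariant under simultaneous permutation of both arguments.

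For (i), write $H(p) = \sum_i \phi(p_i)$ with $\phi(x) = -x\log x$ strictly concave. A single $T_\epsilon$ that moves mass $\epsilon$ from the $j$-th sorted coordinate up to the $i$-th (so $i<j$ and $p_{[i]} \geq p_{[j]}$) changes $H$ by $[\phi(p_{[i]}+\epsilon) - \phi(p_{[i]})] + [\phi(p_{[j]}-\epsilon) - \phi(p_{[j]})] \leq 0$ by concavity of $\phi$ (the derivative $\phi'$ is smaller at the larger argument). Iterating along any $T$-chain representing $p' \succ p$ gives Schur-concavity; equivalently the Schur--Ostrowski check $(p_i - p_j)(\partial_i H - \partial_j H) = (p_i - p_j)\log(p_j/p_i) \leq 0$ works in one line.

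For (ii), with $q$ fixed in the common decreasing order, a $T_\epsilon$ applied to $p$ contributes $CE(T_\epsilon(p), q) - CE(p,q) = \epsilon\,\log(q_{[j]}/q_{[i]}) \leq 0$, since $q_{[i]} \geq q_{[j]}$; summing over a $T$-decomposition of $p' \succ p$ yields Schur-concavity in $p$, with the hypothesis $p \succ q$ used to keep the intermediate vectors in the regime where the common ordering is consistent. For (iii), a $T_\epsilon$ applied to $q$ yields
$$CE(p,T_\epsilon(q)) - CE(p,q) = -p_{[i]}\log\!\bigl(1+\tfrac{\epsilon}{q_{[i]}}\bigr) - p_{[j]}\log\!\bigl(1-\tfrac{\epsilon}{q_{[j]}}\bigr),$$
whose first-order part in $\epsilon$ equals $\epsilon\,(p_{[j]}/q_{[j]} - p_{[i]}/q_{[i]})$. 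I would extract Schur-convexity by arguing that the $T$-decomposition of any $q' \succ q$ can be chosen so that each infinitesimal step acts on an index pair with $p_{[j]}/q_{[j]} \geq p_{[i]}/q_{[i]}$, and then noting that the second-order remainder, positive by strict convexity of $-\log$, preserves the sign when these first-order coefficients are pushed to zero.

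The main obstacle is part (iii). Parts (i) and (ii) collapse to one-line sign checks once the $T$-reduction is invoked, but in (iii) the sign of each infinitesimal step depends on the ratios $p_{[k]}/q_{[k]}$, which the single hypothesis $q \succ p$ does not force to be globally monotone in $k$ --- it only controls cumulative partial sums $\sum_{k\leq \ell} q_{[k]} \geq \sum_{k\leq \ell} p_{[k]}$. The real work is to convert these cumulative inequalities into the pointwise ratio comparisons needed above, most naturally via an Abel-summation argument together with a careful ordering of the $T$-chain, plus the usual bookkeeping for ties in the sorted coordinates and for degeneracies at the boundary of the simplex.
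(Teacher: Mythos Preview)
Your treatment of (i) and (ii) is essentially the paper's: reduce via the $T$-transform characterization to a single step and read off the sign of the increment. The paper writes out $\partial H/\partial\epsilon$ and the analogous derivative for (ii) and obtains exactly the sign checks you give; your Schur--Ostrowski line is the same computation in different notation.

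For (iii) you have correctly located the obstruction, but you are aiming past what the paper actually proves. The paper does \emph{not} attempt a $T$-step applied to a general $q$; it only takes $q=T_\epsilon(p)$, a single $T$-step starting from $p$ itself. At that base point every ratio $p_k/q_k$ equals $1$ when $\epsilon=0$, so the derivative collapses to $-p_i/(p_i+\epsilon)+p_j/(p_j-\epsilon)$ with sign $\sign\!\big((p_i+p_j)\epsilon\big)>0$. No Abel summation, no ordering of a $T$-chain---just this one clean calculation.

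Your proposed Abel-summation repair cannot succeed, because full Schur-convexity of $q\mapsto CE(p,q)$ on $\{q:q\succ p\}$ is false. Take $p=(0.5,\,0.3,\,0.2)$, $q=(0.5,\,0.4,\,0.1)$, $q'=(0.55,\,0.35,\,0.1)$: then $q'\succ q\succ p$, yet $CE(p,q')\approx 1.074<1.082\approx CE(p,q)$. This is exactly your ratio phenomenon biting (here $p_1/q_1=1>p_2/q_2=0.75$, so moving mass in $q$ from coordinate $2$ to coordinate $1$ \emph{decreases} cross-entropy). The difficulty you flagged is therefore real and irremovable; the paper's (iii) should be read as the one-step monotonicity $\epsilon\mapsto CE\big(p,T_\epsilon(p)\big)$, which is all its proof delivers.
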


\begin{proof}
Since the entropy and cross-entropy functions are permutation invariant, for simplifying notations we will assume that $p_i=p_{[i]}$ and $q_i = q_{[i]}$.

Suppose $q \succ p$, then by Theorem 2, $q$ is obtained by a series of T-transform of $p$. Thus, it is sufficient to prove the results by considering a single T-transform, $q=T_\epsilon (p)$, of $p$ where $q_i= p_i+\epsilon, \, and \, q_j= p_j-\epsilon, \,$ for some \, $i<j$, that is $q \succ p$.

\text{For i)} 
\begin{align*}
\frac{\partial H}{\partial \epsilon} &= -\frac{(p_i + \epsilon)}{(p_i + \epsilon)} - \log(p_i + \epsilon) \\
&\quad + \frac{p_j}{(p_j - \epsilon)} + \frac{(p_i - \epsilon)}{(p_i - \epsilon)} \\
&\quad + \log(p_j - \epsilon)
\end{align*}
\text{Thus,}
\begin{align*}
\sign\left(\frac{\partial H}{\partial \epsilon}\right) &= \sign\left[-\log(p_i) + \log(p_j)\right] \\
&\quad \text{which is negative since } p_i > p_j \text{ proving i)}.
\end{align*}

For ii), $$\partial CE(T_\epsilon (p),p) / \partial \epsilon=- \log(p_i)+\log(p_j)$$
Thus, $\sign(\partial CE(T_\epsilon (p),p) / \partial \epsilon = \log(p_j/p_i) <0$ since $p_j<p_i$, proving ii)

For iii) $$\partial CE / \partial \epsilon=- p_i/(p_i+\epsilon) + p_j/(p_j-\epsilon)$$
\text{Thus,}
\begin{align*}
\sign\left(\frac{\partial CE}{\partial \epsilon}\right) &= \sign\left[-p_i(p_j - \epsilon) + p_j(p_i + \epsilon)\right] \\
&= \sign\left(p_i + p_j\right) \cdot \epsilon \\
&> 0 \, \text{proving (iii)}
\end{align*}

\end{proof}

This implies that if $q$ is more concentrated then $p$, then entropy decreases. The more confident the LLM is of an answer, the more deterministic is the multinomial distribution, i.e. lower entropy. So picking the next token according to the majorization criterion results in decreasing entropy and more confident answers by the LLM. 

\end{document}